\documentclass[dvipsnames]{article} %
\usepackage{iclr2026_conference_unanon,times}
\usepackage{amsmath,amsthm,amssymb,bm}
\newtheorem{assumption}{Assumption}
\usepackage{booktabs}
\usepackage{tikz}
\usepackage{pgfplots}
\pgfplotsset{compat=1.17}
\usetikzlibrary{arrows.meta,positioning,fit,calc}
\usepackage{algorithm}
\usepackage{algpseudocode}
\usepackage{caption}
\usepackage{subcaption}
\usepackage{hyperref}
\usepackage[table]{xcolor}
\usepackage{multirow}
\usepackage{graphicx}
\usepackage{dsfont}
\usepackage{enumitem}
\usepackage{microtype}
\usepackage{makecell}
\usepackage{wrapfig}

\hypersetup{colorlinks=true, linkcolor=red!70!black, citecolor=blue!70!black, urlcolor=purple!70!black}

\newcommand{\cmark}{\textcolor{teal!80!black}{\checkmark}}
\newcommand{\xmark}{\textcolor{red!70!black}{$\times$}}

\setlist{nosep,leftmargin=1.5em}

\newcommand{\RKHS}{\mathcal{H}}
\newcommand{\Vars}{\mathcal{V}}
\newcommand{\dod}{\mathrm{do}}
\newcommand{\ind}[1]{\mathds{1}_{\{#1\}}}

\newtheorem{theorem}{Theorem}
\newtheorem{lemma}{Lemma}

\newtheorem{definition}{Definition}
\newtheorem{proposition}{Proposition}
\newtheorem*{remark}{Remark}

\title{Why LLMs Fail at Causal Discovery and How Interventional Agents Escape}

\author{%
  Amartya Roy\\
  SIRE, IIT Delhi and Robert Bosch GmbH, India\\
  \texttt{srz248670@iitd.ac.in}\\
 \And
  Sonali Parbhoo\\
  Imperial College London\\
  \texttt{s.parbhoo@imperial.ac.uk}
}

\begin{document}
\maketitle

% ══════════════════════════════════════════════════════════
\begin{abstract}
Causal discovery is a cornerstone of scientific reasoning, yet whether large language models can perform it reliably remains an open question. Recent benchmarks show that even fine-tuned models plateau on simple causal graphs and degrade as complexity grows, but why they fail has not been established. We prove the failure is fundamental: supervised fine-tuning, direct preference optimization, and in-context learning all produce predictors that cannot distinguish between causal graphs generating similar observational data, and any attempt to do so requires the model's internal representations to grow unboundedly, violating the very conditions under which these methods work. We formalize this as a kernel obstruction theorem, establishing that the limitation is intrinsic to the learning paradigm, \emph{not any particular model or dataset}. We propose Agentic Causal Bayesian Optimization (A-CBO), wherein a frozen language model serves as an interventional oracle answering targeted queries about intervention effects, while an external Bayesian loop concentrates beliefs over candidate graphs in logarithmically many rounds. Because the decision operates outside the space where the obstruction applies, A-CBO provably converges while the underlying model remains unchanged. On Corr2Cause, A-CBO matches fine-tuned baselines without any training. On Extended Corr2Cause, a new benchmark scaling to 24 variables with 18K test samples, A-CBO significantly outperforms both fine-tuning and preference optimization, with the advantage growing monotonically with graph complexity.
\end{abstract}

% ══════════════════════════════════════════════════════════
\section{Introduction}
\label{sec:intro}
% ══════════════════════════════════════════════════════════
Causal discovery, the problem of recovering causal graph structure from statistical data, is fundamental to science, medicine, and policy. E.g. A physician observing correlations between a drug, recovery, and patient age needs to know the underlying causal graph: does the drug cause recovery, or are both driven by age as a confounder? Answering this requires identifying which directed acyclic graph generated the observed statistics, a problem that is fundamentally underdetermined: multiple graphs can encode the same conditional independencies, forming a Markov equivalence class. \citet{scholkopf2021toward} establish that resolving such equivalences lies beyond the reach of statistical learning, which only captures associations under fixed conditions, and requires modelling interventions and counterfactuals. 

Large language models have achieved remarkable success across natural language tasks, raising a natural question: can they discover causal structure from statistical evidence? Recent benchmarks suggest they cannot. On Corr2Cause \citep{jin2023can}, which tests whether LLMs can identify causal graphs from correlational statements, even GPT-4 achieves only 29.1 macro-F1 in a zero-shot setting. Fine-tuned models reach high in-distribution accuracy but collapse under minimal perturbations like variable renaming \citep{jin2024corr2cause}, and performance degrades sharply as the number of variables grows. Several approaches have attempted to close this gap. Prompting strategies such as PC-SubQ \citep{sgouritsa2024prompting} decompose causal discovery into algorithmic sub-steps, structured thinking approaches \citep{sun2025structured} force explicit DAG construction before answering, and modular in-context methods \citep{kadziolka2025causal} embed the full PC algorithm within a single prompt. Interventional frameworks like LeGIT \citep{li2025can} pair LLMs with external causal discovery algorithms. While these methods yield empirical improvements on causal discovery benchmarks, these all retain the LLM as the entity that judges which causal graph is correct. More crucially, none explain \emph{why LLMs fail at this task}.

In this work, we prove that the failure is not a matter of insufficient data, model scale, or prompt engineering, but a fundamental geometric obstruction intrinsic to how these models learn. We show that supervised fine-tuning (SFT), direct preference optimization (DPO), and in-context learning (ICL) all produce kernel-type predictors whose ability to distinguish between causal graphs generating similar observational data is provably bounded. When two candidate graphs are observational near-misses, belonging to different Markov equivalence classes but producing nearly identical statistics, any attempt to separate them within these paradigms requires the model's internal representations to grow unboundedly, violating the very conditions under which these training methods are known to work. We formalize this as a \emph{kernel obstruction theorem} (Theorem~\ref{thm:kernel_obstruction}), establishing that the limitation is intrinsic to the learning paradigm, not to any particular model or dataset.

We then ask: \emph{what is the minimal construction that escapes this obstruction?} The answer, dictated by the theorem itself, is to move the discrete graph-selection decision outside the kernel predictor entirely. We propose Agentic Causal Bayesian Optimization (A-CBO), an approach wherein a frozen language model serves as an interventional oracle, answering simple binary questions such as ``does $V_j$ change under $\mathrm{do}(V_i = v)$?'', while an external Bayesian loop uses these answers to concentrate beliefs over candidate causal graphs. The LLM never determines which graph is correct; it only answers factual questions about the effects of interventions, questions whose answers happen to differ across competing hypotheses. The Bayesian update operates in the probability simplex $\Delta_{n-1}$, outside the RKHS where the obstruction applies, so A-CBO provably converges to the correct graph in logarithmically many rounds while the LLM itself remains unchanged.

\textbf{Contributions.}
\textbf{(1)}~We prove a kernel obstruction theorem showing that SFT, DPO, and ICL cannot separate near-miss causal hypotheses within the lazy regime. \textbf{(2)}~We propose A-CBO, an agentic framework that sidesteps this obstruction via interventional queries while preserving the lazy regime. \textbf{(3)}~We evaluate on Corr2Cause and introduce Extended Corr2Cause ($d = 7$--$24$), a new causal
benchmark scaling to 24 variables, demonstrating that A-CBO matches finetuned baselines 
without retraining and outperforms both SFT and DPO by an average of 24\%, with the 
advantage growing monotonically with graph complexity. Table~\ref{tab:method_comparison} summarises where A-CBO differs from prior work across four key dimensions. 

% \begin{table}[h]
% \centering
% \vspace{-2pt}
% \caption{\textbf{A-CBO is the only approach that simultaneously satisfies all four properties.} Comparison of causal discovery methods on dimensions that govern reliability at scale.}
% \label{tab:method_comparison}
% \vspace{2pt}
% \resizebox{0.8\textwidth}{!}{%
% \scriptsize
% \setlength{\tabcolsep}{5pt}
% \renewcommand{\arraystretch}{1.12}
% \begin{tabular}{@{} l cccc @{}}
% \toprule
% \textbf{Method} & \makecell{\textbf{Training-}\\\textbf{free}} & \makecell{\textbf{Near-miss}\\\textbf{separation}} & \makecell{\textbf{Scales to}\\\textbf{$d\!=\!24$}} & \makecell{\textbf{Convergence}\\\textbf{guarantee}} \\
% \midrule
% Zero-shot (GPT-4)   & \cmark & \xmark & \xmark & \xmark \\
% ICL / Prompting     & \cmark & \xmark & \xmark & \xmark \\
% SFT                 & \xmark & \xmark & \xmark & \xmark \\
% DPO                 & \xmark & \xmark & \xmark & \xmark \\
% \midrule
% \rowcolor{blue!8}
% \textbf{A-CBO (ours)} & \cmark & \cmark & \cmark & \cmark \\
% \bottomrule
% \end{tabular}%
% }
% \vspace{-4pt}
% \end{table}

\begin{table}[h]
\centering
\vspace{-2pt}
\caption{\textbf{A-CBO is the only approach that simultaneously satisfies all four properties.} Comparison of causal discovery(for detailed discussion please refer to Appendix~\ref{app:why-causal-discovery}) methods on dimensions that govern reliability at scale.}
\label{tab:method_comparison}
\vspace{2pt}
\scriptsize
\setlength{\tabcolsep}{5pt}
\renewcommand{\arraystretch}{1.12}
\begin{tabular}{@{} l cccc @{}}
\toprule
\textbf{Method} & \makecell{\textbf{Training-}\\\textbf{free}} & \makecell{\textbf{Near-miss}\\\textbf{separation}} & \makecell{\textbf{Scales to}\\\textbf{$d\!=\!24$}} & \makecell{\textbf{Convergence}\\\textbf{guarantee}} \\
\midrule
Zero-shot (GPT-4)   & \cmark & \xmark & \xmark & \xmark \\
ICL / Prompting     & \cmark & \xmark & \xmark & \xmark \\
SFT                 & \xmark & \xmark & \xmark & \xmark \\
DPO                 & \xmark & \xmark & \xmark & \xmark \\
\midrule
\rowcolor{blue!8}
\textbf{A-CBO (ours)} & \cmark & \cmark & \cmark & \cmark \\
\bottomrule
\end{tabular}
\vspace{-4pt}
\end{table}

% ── Figure 1: Overview ────────────────────────────────────
\begin{figure}[t]
\centering
\begin{tikzpicture}[
    scale=0.82,
    transform shape,
    var/.style={circle,draw=black!75,thick,minimum size=0.68cm,font=\scriptsize\bfseries,inner sep=1pt},
    intv/.style={circle,draw=red!75!black,fill=red!8,thick,minimum size=0.68cm,font=\scriptsize\bfseries,inner sep=1pt},
    arr/.style={-{Stealth[length=4.5pt,width=3.5pt]},thick}
]
\useasboundingbox (-0.6,-1.15) rectangle (13.1,4.85);

%% PANEL (a)
\node[font=\small\bfseries] at (1.55,4.50) {(a)~Near-miss pair};
\node[var] (A1) at (0.1,3.4) {$X_1$};
\node[var] (A2) at (1.55,3.4) {$X_2$};
\node[var] (A3) at (3.0,3.4) {$X_3$};
\draw[arr] (A1) -- (A2); \draw[arr] (A2) -- (A3);
\node[font=\small,text=blue!70!black] at (1.55,2.92) {$G^+$: chain};

\node[var] (B1) at (0.1,1.75) {$X_1$};
\node[var] (B2) at (1.55,1.75) {$X_2$};
\node[var] (B3) at (3.0,1.75) {$X_3$};
\draw[arr] (B2) -- (B1); \draw[arr] (B2) -- (B3);
\node[font=\small,text=orange!80!black] at (1.55,1.27) {$G^-$: fork};

\draw[<->,gray!55,dashed,thick] (1.55,2.70) -- (1.55,2.05);
\node[font=\small,text=gray!50!black,align=center] at (1.55,0.58)
    {$P_{\mathrm{obs}}^{G^+}\!\approx\!P_{\mathrm{obs}}^{G^-}$\\[-3pt]\scriptsize kernel sim.~$\ge 1\!-\!\delta$};
\node[font=\scriptsize,text=red!70!black,align=center] at (1.55,-0.62)
    {\textbf{SFT/ICL: cannot separate}\\(Theorem~\ref{thm:kernel_obstruction})};
\draw[dashed,gray!38,line width=0.8pt] (3.75,-0.25) -- (3.75,4.70);

%% PANEL (b)
\node[font=\small\bfseries] at (6.45,4.50) {(b)~Interventional query};
\node[intv] (C1) at (4.5,3.4) {$X_1$};
\node[var]  (C2) at (5.95,3.4) {$X_2$};
\node[var]  (C3) at (7.4,3.4) {$X_3$};
\draw[arr] (C1) -- (C2); \draw[arr] (C2) -- (C3);
\draw[red!70!black,thick,rounded corners=2.5pt] (4.09,2.99) rectangle (4.91,3.81);
\node[font=\small,text=red!75!black] at (4.5,2.72) {$\dod(X_1\!=\!v)$};
% FIXED: moved label above the node row
\node[font=\scriptsize,text=blue!65!black] at (7.4,3.95) {$\checkmark$ $X_3$ changes};

\node[intv] (D1) at (4.5,1.75) {$X_1$};
\node[var]  (D2) at (5.95,1.75) {$X_2$};
\node[var]  (D3) at (7.4,1.75) {$X_3$};
\draw[arr,gray!40,dashed] (D2) -- (D1); \draw[arr] (D2) -- (D3);
\draw[red!70!black,thick,rounded corners=2.5pt] (4.09,1.34) rectangle (4.91,2.16);
\node[font=\small,text=red!75!black] at (4.5,1.07) {$\dod(X_1\!=\!v)$};
% FIXED: moved label below the node row
\node[font=\scriptsize,text=orange!75!black] at (7.4,1.20) {$\times$ $X_3$ unchanged};

\node[font=\scriptsize,align=center,text width=4.2cm] at (6.45,-0.52)
    {\textbf{LLM oracle}: ``Does $X_3$ change\\under $\dod(X_1)$?''\\[-1pt]$r\!=\!1\Rightarrow G^+$;\;$r\!=\!0\Rightarrow G^-$};
\draw[dashed,gray!38,line width=0.8pt] (9.0,-0.25) -- (9.0,4.70);

%% PANEL (c)
%% PANEL (c)
\node[font=\small\bfseries] at (11.35,4.50) {(c)~Bayesian update};
\pgfmathsetmacro{\yb}{1.2}
\fill[blue!38!white]   (9.25,\yb) rectangle (9.85,2.60);
\fill[orange!38!white] (9.95,\yb) rectangle (10.55,2.60);
\draw[black!55] (9.25,\yb) rectangle (9.85,2.60);
\draw[black!55] (9.95,\yb) rectangle (10.55,2.60);
\node[font=\scriptsize] at (9.55,1.0)  {$G^+$};
\node[font=\scriptsize] at (10.25,1.0) {$G^-$};
\node[font=\small]      at (9.90,0.62) {$t\!=\!0$};
\node[font=\scriptsize] at (9.55,2.76)  {$0.5$};
\node[font=\scriptsize] at (10.25,2.76) {$0.5$};

% FIXED: arrow above both bar groups, label above arrow
\node[font=\scriptsize,align=center] at (10.94,4.10) {$t$ rounds\\[-2pt](IG-max)};
\draw[-{Stealth[length=5pt]},thick] (10.72,3.72) -- (11.15,3.72);

\fill[blue!60!white]   (11.3,\yb) rectangle (11.9,3.45);
\fill[orange!28!white] (12.0,\yb) rectangle (12.6,1.48);
\draw[black!55] (11.3,\yb) rectangle (11.9,3.45);
\draw[black!55] (12.0,\yb) rectangle (12.6,1.48);
\node[font=\scriptsize] at (11.6,1.0)  {$G^+$};
\node[font=\scriptsize] at (12.3,1.0)  {$G^-$};
\node[font=\small]      at (11.95,0.62) {$t\!=\!T$};
\node[font=\scriptsize] at (11.6,3.61)  {$1\!-\!\eta$};
\node[font=\scriptsize] at (12.3,1.62)  {$\eta$};

\node[font=\scriptsize,text=blue!70!black,align=center,text width=3.2cm] at (11.35,-0.48)
    {$T = O\!\left(\frac{\log n}{\log\frac{1-\eta}{\eta}}\right)$ rounds};
\draw[black!45,thin] (9.1,\yb) -- (12.75,\yb);
\end{tikzpicture}
\vspace{-4pt}
\caption{\textbf{Overview of A-CBO.}
\textbf{(a)}~Two near-miss hypotheses ($G^+$: chain, $G^-$: fork) are observationally equivalent; kernel similarity $\ge 1\!-\!\delta$, so bounded-norm SFT/ICL cannot separate them (Thm.~\ref{thm:kernel_obstruction}).
\textbf{(b)}~A single intervention $\dod(X_1\!=\!v)$ discriminates: under $G^+$ the perturbation propagates to $X_3$; under $G^-$ the severed edge leaves $X_3$ unaffected.
\textbf{(c)}~A-CBO performs Bayesian updates in $\Delta^{n-1}$ (outside $\RKHS$), concentrating belief on the correct hypothesis in $O(\log n)$ rounds.}
\label{fig:overview}
\end{figure}

% ══════════════════════════════════════════════════════════
% ══════════════════════════════════════════════════════════
\section{Related Work}
\label{sec:rel_work}
% ══════════════════════════════════════════════════════════
% ══════════════════════════════════════════════════════════
We provide a succinct description of related work here. A more detailed related work is provided in Appendix~\ref{app:related_work}.

\textbf{Causal reasoning in LLMs.}
A growing body of work documents that LLMs struggle with genuine causal inference. \citet{zecevic2023causal} argue that LLMs are ``causal parrots,'' reciting correlations over causal facts from training data rather than performing inference. \citet{zhang2023understanding} find that LLMs succeed at retrieving memorised causal relations but fail at inferring novel ones from statistical evidence. \citet{wu2024causality} survey five integration stages, from data augmentation to architecture modification, and find persistent shortfalls at every stage. Most directly relevant to our work, \citet{wu2025llm} argue on empirical and conceptual grounds that LLMs should be restricted to non-decisional support in causal discovery, never determining the existence or directionality of causal relationships.  Our kernel obstruction theorem (Theorem~\ref{thm:kernel_obstruction}) provides the formal proof underlying this empirical observation, and A-CBO instantiates precisely the non-decisional oracle role they advocate.

\textbf{Benchmarks for causal reasoning.} \citet{jin2024corr2cause} introduced \textsc{Corr2Cause}, showing that seventeen LLMs perform near random on pure causal inference from correlational statements, while fine-tuned models achieve high in-distribution accuracy but collapse under perturbations such as variable renaming. \citet{chi2024unveiling} use fresh news data to distinguish shallow associative reasoning from genuine causal inference, finding that strong benchmark performance largely reflects memorisation. \citet{sheth2025causalgraph2llm} evaluate how graph encoding strategies affect LLM causal performance, 
finding high sensitivity to format choices. \citet{yamin2024failure} show that LLMs rely on superficial heuristics such as event ordering rather than structural reasoning when processing causal narratives. These benchmarks document the empirical phenomenon our theory explains: degradation that worsens with structural complexity and near-miss configurations. We extend this foundation with a new benchmark \textsc{Extended Corr2Cause} scaling from 7 to 24 variables with 18K samples, and show that the degradation pattern matches the quantitative predictions of our kernel obstruction theorem.

\textbf{LLM-assisted causal discovery with interventions.} LeGIT \citep{li2025can} uses LLM-proposed intervention targets to warm-start external causal discovery algorithms when observational data is limited. \citet{le2024multi} combine a multi-agent debate module with statistical validation, and \citet{abdulaal2024causal} alternate between LLM-proposed graph 
structures and deep SCM fitting. All three retain the LLM as a reasoning agent that makes or refines causal judgments. A-CBO differs in both motivation and mechanism: our kernel obstruction thereom dictates that the discrete causal decision must reside \emph{outside} the LLM, which serves only as a fixed binary oracle while an external Bayesian loop in $\Delta^{n-1}$ performs all hypothesis discrimination. LeGIT is closest in spirit but uses the LLM to \emph{select} interventions based on semantic variable knowledge; A-CBO uses information-theoretic scoring to 
select and the LLM only to evaluate.
% ══════════════════════════════════════════════════════════
% ══════════════════════════════════════════════════════════
\section{Mathematical Setup}
\label{sec:setup}
% ══════════════════════════════════════════════════════════

\textbf{Structural Causal Models.}
An SCM over $\mathcal{V} = \{V_1, \ldots, V_d\}$ is a DAG $G = (\mathcal{V}, E)$ with structural equations $V_j = f_j(\mathrm{pa}_G(V_j), \varepsilon_j)$. An \emph{intervention} $\mathrm{do}(V_i = v)$ replaces $V_i$'s equation with the constant $v$ (graph mutilation), inducing an interventional distribution $P^{\mathrm{do}(V_i = v)}$. Two DAGs are \emph{Markov equivalent} if they encode the same conditional independencies; resolving equivalences requires reasoning about interventions.

\textbf{Kernel-Type Predictors \& Lazy Training.}
A PSD kernel $K : \mathcal{Z}^2 \to \mathbb{R}$ induces an RKHS $\mathcal{H}$ with feature map $\phi$. A \emph{kernel-type predictor} is $s(z) = \langle w, \phi(z) \rangle_{\mathcal{H}}$ with $\|w\|_{\mathcal{H}} \leq B$. In the \emph{lazy/NTK regime}~\citep{jacot2018neural}, the NTK $\Theta((x_o,y_o),(x_u,y_u)) = \langle \nabla_\theta z_\theta(x_o,y_o), \nabla_\theta z_\theta(x_u,y_u)\rangle$ stays approximately constant, so predictions evolve as a kernel smoother. SFT produces such a predictor via the NTK update rule (Appendix~\ref{app:sft_kernel}); ICL approximates kernel regression~\citep{han2023explaining,sun2025role}; DPO inherits the same structure under its KL constraint. All three thus satisfy $B = O(1)$.

\textbf{Problem Formulation.}
Given $d$ variables generated by unknown DAG $G$, the model receives premises $\mathcal{P} = \{p_1,\ldots,p_m\}$ (pairwise correlation/independence statements) and a causal hypothesis $h$. It must output $\ell = 1$ if $h$ is entailed by $\mathcal{P}$ in every consistent DAG, and $\ell = 0$ otherwise. 
Given input $x = (\mathcal{P}, h)$, the LLM's log-score over a reasoning chain $y = (y_1,\ldots,y_L)$ is:
$
s_\theta(x, y) \triangleq \log \pi_\theta(y \mid x) = \sum_{l=1}^{L} \log \pi_\theta(y_l \mid x, y_{<l}).
\label{eq:log_score}
$
Reliable discrimination requires a score margin:
\begin{equation}
s_\theta(x,y^+) - s_\theta(x,y^-) \ge \gamma > 0.
\label{eq:margin_req}
\end{equation}

\textbf{Key Definitions.} Let $\mathcal{Z} = \mathcal{X}\times\mathcal{Y}$ and $\chi=(x,y)$ with kernel $K(\chi,\chi)\le\kappa^2$.

\begin{definition}[$\delta$-similar pair]
\label{def:delta_similar}
Pairs $\chi^+ = (x, y^+)$ and $\chi^- = (x, y^-)$ are $\delta$-similar if
$K(\chi^+, \chi^-)/\sqrt{K(\chi^+, \chi^+) \cdot K(\chi^-, \chi^-)} \geq 1 - \delta$.
\end{definition}

\begin{definition}[Structural Discrimination Set]
\label{def:discrimination_set}
For DAGs $G^+, G^-$ over $\mathcal{V}$: $\mathcal{D}(G^+, G^-) \triangleq \{(V_i, V_j) \in \mathcal{V}^2 : \hat{r}_{G^+}(V_i, V_j) \neq \hat{r}_{G^-}(V_i, V_j)\}$, where $\hat{r}_G(V_i,V_j)\in\{0,1\}$ is the graph-mutilation prediction of whether $V_j$ is affected by $\mathrm{do}(V_i=v)$.
\end{definition}

\begin{assumption}[Oracle Reliability]
\label{ass:oracle}
The LLM oracle answers interventional queries correctly with probability $1-\eta > 1/2$. The true DAG $G^\star$ is interventionally distinguishable from every candidate: $\mathcal{D}(G^\star, G_k)\neq\emptyset$ for all $k\neq\star$.
\end{assumption}

% ══════════════════════════════════════════════════════════
\section{Kernel Obstruction and Its Escape}
\label{sec:theory}
\label{sec:methods}
% ══════════════════════════════════════════════════════════

\textbf{The Near-Miss Problem.}
A chain $V_1\!\to\!V_2\!\to\!V_3$ and a fork $V_1\!\leftarrow\!V_2\!\to\!V_3$ produce identical marginal correlations ($V_1\!\perp\!\!\!\perp\!V_3\mid V_2$ in both), so their textual premise $\mathcal{P}$ is \emph{identical}, yet they require opposite labels. As $d$ grows, near-miss DAGs share $1-O(1/d^2)$ of their tokens; at $d=24$, $>$99\% overlap. The distinguishing information is present the DAGs are \emph{not} Markov equivalent---but it occupies a vanishing fraction of the input, making discrimination geometrically impossible for textual-similarity-based predictors.

\subsection{Helping Lemmas}

\begin{lemma}[Near-Miss Kernel Similarity]
\label{lem:token_overlap}
For near-miss pairs sharing a token prefix of length $\ell$ out of total $L=O(d^2)$ tokens, the NTK similarity satisfies $\delta \leq C(L-\ell)/L$ for a constant $C>0$. Since $L-\ell = O(1)$, we get $\delta = O(1/d^2) \to 0$ as $d\to\infty$.
\end{lemma}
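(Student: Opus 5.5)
The plan is to bound the distance between the NTK feature vectors of $\chi^+=(x,y^+)$ and $\chi^-=(x,y^-)$ and then convert that into a bound on $1-\cos$. Write the NTK as the inner product of parameter gradients of the log-score, $\phi(\chi)=\nabla_\theta s_\theta(x,y)$. The log-score is a sum over tokens, $s_\theta(x,y)=\sum_{l=1}^L \log\pi_\theta(y_l\mid x,y_{<l})$, so $\phi(\chi)=\sum_{l=1}^L g_l(\chi)$ with $g_l=\nabla_\theta\log\pi_\theta(y_l\mid x,y_{<l})$. In the lazy regime of Section~\ref{sec:setup} we may assume the per-token gradients are uniformly bounded: $\|g_l\|\le G$ for all $l$ and all inputs.

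\textbf{Step 1: only the tail terms differ.} The two sequences share the prefix $(x,y_{<\ell})$. By causality of autoregressive attention, $g_l(\chi^+)=g_l(\chi^-)$ for every $l\le\ell$, since each such term depends only on identical tokens. Therefore
\[
\|\phi(\chi^+)-\phi(\chi^-)\|\le 2G(L-\ell).
\]

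\textbf{Step 2: lower-bound the norms.} We need $\|\phi(\chi^\pm)\|\ge cL$ for some $c>0$, i.e.\ the per-token gradients do not cancel, so that $K(\chi,\chi)$ grows like $L^2$. I would state this as a non-degeneracy assumption: the per-token gradients have a common positive component, for example $\langle g_l,\bar g\rangle\ge c$ for a fixed unit direction $\bar g$. This is the step I expect to be the main obstacle, because nothing earlier in the paper guarantees it. If it fails, one can instead normalise by $\kappa$ and argue with mean-pooled features $\phi/L$, which gives the same ratio.

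\textbf{Step 3: convert to cosine similarity.} For nonzero vectors $a,b$,
\[
1-\frac{\langle a,b\rangle}{\|a\|\|b\|}\le \frac{\|a-b\|^2}{2\|a\|\|b\|}\le \frac{\|a-b\|}{\min(\|a\|,\|b\|)},
\]
where the last inequality uses $\|a-b\|\le 2\min(\|a\|,\|b\|)$, which holds whenever the bound is nontrivial. Combining with Steps 1 and 2 gives
\[
\delta\le \frac{2G(L-\ell)}{cL}=C\,\frac{L-\ell}{L}, \qquad C=2G/c.
\]
(Using the quadratic form would even give the sharper $O\bigl(((L-\ell)/L)^2\bigr)$, but the linear bound is what the statement asks for.)

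\textbf{Step 4: the asymptotic claim.} Near-miss DAGs differ in one edge, so their serialised premises and reasoning chains differ in $O(1)$ tokens, while $L=O(d^2)$ because there are $\binom d2$ pairwise premises. Ordering the differing tokens last yields a shared prefix with $L-\ell=O(1)$. Then $\delta=O(1/d^2)$, provided $L=\Theta(d^2)$, which is what the $1/L$ scaling requires. If the differing tokens cannot be placed last, causality breaks Step 1. In that case I would fall back on a bounded-influence assumption: perturbing a single token changes every later $g_l$ by at most $O(G/L)$, which keeps the total difference at $O(G)$ per differing token.
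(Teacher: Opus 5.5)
Steps~1--3 are sound under the assumptions you state, but they are organised differently from the paper's proof. The paper decomposes the kernel position by position over the whole sequence, premise included, as $K(\chi^+,\chi^-)=\sum_t\langle\nabla_\theta z_{\theta,t}(\chi^+),\nabla_\theta z_{\theta,t}(\chi^-)\rangle$, with no cross-position terms. It lower-bounds this cross kernel directly: shared positions contribute $\|\nabla_\theta z_{\theta,t}\|^2\ge 0$, and each differing position contributes at least $-\kappa^2/L$. Only then does it normalise.

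You instead take the feature to be the gradient of the score itself, $\phi=\sum_l g_l$. Cross-position terms are present, and this is the feature that linearises $s_\theta$ in Theorem~\ref{thm:kernel_obstruction}. You bound the feature \emph{difference} by causality and convert to a cosine via $1-\cos\le\|a-b\|^2/(2\|a\|\|b\|)$.

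The main thing your route buys is that the norm floor is explicit. The paper's final step, $\frac{2\kappa^2(L-\ell)}{L\,K(\chi^+,\chi^+)}\le\frac{2(L-\ell)}{L}$, holds only if $K(\chi^+,\chi^+)\ge\kappa^2$, which is the reverse of the assumed diagonal bound. So it silently relies on exactly the non-degeneracy you isolate in Step~2. Its remark about approximately uniform Jacobian norms is doing that work, and $C=2$ is the perfectly uniform case.

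The two routes trade assumption strength against rate. In the paper's direct-sum decomposition, a per-position floor $\|\nabla_\theta z_{\theta,t}\|\ge c$ and ceiling $G$ already give a squared norm of at least $c^2L$, and hence the linear bound with $C=2G^2/c^2$. Your summed feature needs the per-token gradients to be coherently aligned to reach $\|\phi\|\ge cL$. In exchange, your quadratic form gives the sharper $O(((L-\ell)/L)^2)$.

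\textbf{The mean-pooling fallback.} It does not help. The cosine is invariant under $\phi\mapsto\phi/L$, and with no norm floor the claim is false: if the prefix gradients cancel, the $O(1)$ tail terms can put the cosine anywhere. Keep the assumption.

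\textbf{What $L$ counts.} With $\phi=\nabla_\theta s_\theta$, the summands are only the scored tokens of $y$. The $\binom d2$ premise statements sit in $x$ and contribute no summands of their own. So $L=\Theta(d^2)$ in Step~4 does not follow from Steps~1--3 unless the reasoning chain itself has $\Theta(d^2)$ tokens. The paper gets its $d^2$ by giving every position of $(x,y)$ its own summand. If you use that whole-sequence feature, your Steps~1--3 go through unchanged with $L$ the total length.

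\textbf{Step~4.} A $\delta$-similar pair shares $x$ by definition and differs only in $y^\pm$. The reordering concern and the bounded-influence fallback are therefore unnecessary. Your observation that the conclusion needs $L=\Theta(d^2)$, rather than the stated $L=O(d^2)$, is correct.
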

\noindent\emph{Proof in Appendix~\ref{app:proofs}.}

\vspace{4pt}
Interventions break the near-miss symmetry: under a chain $V_i\to V_k\to V_j$, intervening on $V_i$ propagates to $V_j$; under a fork $V_i\leftarrow V_k\to V_j$, the path is severed and $V_j$ is unaffected. Binary oracle responses are \emph{not} near-misses, they remain kernel-separated regardless of $\delta$.

\begin{lemma}[\textbf{Interventional Kernel Separation}]
\label{lem:interventional_separation}
Let $(V_i, V_j) \in \mathcal{D}(G^+, G^-)$ (Definition~\ref{def:discrimination_set}). The interventional query ``\texttt{Does $V_j$ change under $\mathrm{do}(V_i = v)$?}'' produces responses $r^+ = 1$ and $r^- = 0$ whose kernel representations satisfy:
\begin{equation}
    \frac{K(\chi_{\mathrm{yes}}, \chi_{\mathrm{no}})}
         {\sqrt{K(\chi_{\mathrm{yes}}, \chi_{\mathrm{yes}})
                \cdot K(\chi_{\mathrm{no}}, \chi_{\mathrm{no}})}}
    \leq 1 - \rho,
\end{equation}
for a constant $\rho \in (0,1]$ depending on $\mathcal{D}(G^+, G^-)$ but \emph{not} on $\delta$. Hence $\rho \not\to 0$ as $\delta \to 0$: the LLM reliably answers interventional queries even when it cannot answer the global causal question.
\end{lemma}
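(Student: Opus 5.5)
The plan is to reduce the statement to a single fixed pair of points in the joint input space $\mathcal{Z}$ and bound their kernel cosine away from $1$ by a quantity that depends only on which tokens differ, never on the length of the shared prefix. Write $\chi_{\mathrm{yes}} = (q, \texttt{yes})$ and $\chi_{\mathrm{no}} = (q, \texttt{no})$, where $q$ is the query ``Does $V_j$ change under $\mathrm{do}(V_i=v)$?'' for the fixed pair $(V_i,V_j)\in\mathcal{D}(G^+,G^-)$.

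The first point is structural. The query $q$ contains only the variable pair and the intervention, not the full premise set $\mathcal{P}$. Its length is therefore $O(1)$ rather than $L = O(d^2)$. Reading Lemma~\ref{lem:token_overlap} in the reverse direction, the ratio $(L-\ell)/L$ that drove $\delta\to 0$ now has $L = O(1)$ and $L-\ell = 1$, since the single answer token differs. The differing fraction is thus a constant and no longer vanishes.

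To make this quantitative, I will use the autoregressive decomposition of the log-score, $s_\theta(q,r) = \sum_l \log \pi_\theta(r_l\mid q, r_{<l})$. For a one-token answer the NTK feature is $\phi(\chi_r) = \nabla_\theta \log \pi_\theta(r\mid q)$. Writing $h = h_\theta(q)$ for the final hidden state and $W$ for the unembedding, the gradient with respect to the output layer is $(e_r - \pi_\theta(\cdot\mid q))\,h^\top$. For $r\in\{\texttt{yes},\texttt{no}\}$ these two output-layer blocks differ in a direction $(e_{\texttt{yes}} - e_{\texttt{no}})h^\top$, whose norm is $\sqrt2\,\|h\|$. I will decompose $\phi(\chi_r) = \phi_{\mathrm{shared}} + \phi_{r}$. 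The shared part contains the backbone gradients through $h$ weighted by $\pi$, together with the $-\pi h^\top$ term. The answer-specific part is $e_r h^\top$ plus the backbone term $W_r^\top$ propagated through $\partial h/\partial\theta$. The cosine then satisfies
\[
1-\cos(\chi_{\mathrm{yes}},\chi_{\mathrm{no}}) = \frac{\|\phi(\chi_{\mathrm{yes}})-\phi(\chi_{\mathrm{no}})\|^2 - (\|\phi(\chi_{\mathrm{yes}})\|-\|\phi(\chi_{\mathrm{no}})\|)^2}{2\|\phi(\chi_{\mathrm{yes}})\|\,\|\phi(\chi_{\mathrm{no}})\|},
\]
and I will lower bound it by $\|\phi(\chi_{\mathrm{yes}})-\phi(\chi_{\mathrm{no}})\|^2/(4\kappa^2)$ minus the norm-mismatch term. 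Setting $\rho$ to this lower bound gives a quantity depending on $q$, which is determined by the element of $\mathcal{D}(G^+,G^-)$, and on $\kappa$, but not on $\delta$. The parameter $\delta$ measured the similarity of the full premise pairs $(x,y^\pm)$, which never enter this computation. The conclusion $\rho\not\to0$ follows directly. Combining with the bound $\|w\|_\RKHS\le B$ shows that a bounded-norm predictor can realize a margin of order $B\kappa\sqrt{\rho}$ on the answer. This, together with Assumption~\ref{ass:oracle}, gives the closing remark about reliable answers.

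The main obstacle is making the lower bound on $\|\phi(\chi_{\mathrm{yes}})-\phi(\chi_{\mathrm{no}})\|$ rigorous and nondegenerate. It requires $h_\theta(q)\neq 0$ and $W_{\texttt{yes}}\neq W_{\texttt{no}}$. It also requires that the backbone contributions not cancel the output-layer difference, and that the norm-mismatch term stay strictly smaller than the difference term. My first attempt will be to restrict the lower bound to the orthogonal output-layer block, which is legitimate because the NTK is a sum over parameter blocks and every block contributes nonnegatively to $\|\phi(\chi_{\mathrm{yes}})-\phi(\chi_{\mathrm{no}})\|^2$. That block alone gives $\ge 2\|h\|^2$. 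I will state nondegeneracy of $h$ and of the two unembedding rows as a mild standing condition. This is in the same spirit as the constant $C$ that Lemma~\ref{lem:token_overlap} already hides.
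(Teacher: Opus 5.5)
Your route differs from the paper's, but as written it has two genuine gaps. The paper never bounds the cosine: it re-applies Theorem~\ref{thm:kernel_obstruction} with $\rho$ in place of $\delta$, which only bounds margins from above, and then asserts the decoupling. An explicit NTK computation for a one-token answer is the more substantive idea.

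\textbf{First gap: the structural premise.} In Algorithm~\ref{alg:acbo} the oracle is called on $\mathrm{prompt}_{\mathrm{int}}(\mathcal{P},\mathrm{do}(V_i^\star),V_j^\star,V_t)$. So $q$ contains the whole premise set and has length $\Theta(d^2)$. Without $\mathcal{P}$, a frozen LLM asked about abstract variables would have nothing to answer from, and your closing appeal to reliability would fail. Your ``reverse reading'' of Lemma~\ref{lem:token_overlap} then breaks. In that lemma's position-sum accounting, $L$ counts the $O(d^2)$ premise tokens, so the yes/no pair would itself be $\delta$-similar with $\delta=O(1/d^2)$. What avoids this is your gradient feature $\nabla_\theta\log\pi_\theta(r\mid q)$, which involves only the single response token. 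So drop the length argument and rely on that feature alone. But then every ingredient of your $\rho$ is a function of a prompt that grows with $d$: $\|h_\theta(q)\|$, $\pi_\theta(\cdot\mid q)$, and the size of the backbone blocks. ``$\rho$ does not depend on $\delta$'' then reduces to the trivial fact that it is computed from a different pair of points. The substantive claim, that $\rho$ stays bounded below as $d\to\infty$, is unproved and would have to be stated as an explicit uniformity assumption.

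\textbf{Second gap: the norm-mismatch term, which your standing conditions do not touch.} Write $a=\phi(\chi_{\mathrm{yes}})$ and $b=\phi(\chi_{\mathrm{no}})$, split into output and backbone blocks. The output block lower-bounds $\|a-b\|^2$, but $(\|a\|-\|b\|)^2$ is driven mostly by the backbone blocks. The conditions $h\neq 0$ and $W_{\mathrm{yes}}\neq W_{\mathrm{no}}$ say nothing about it, so your proposed $\rho$ can be negative. The fix is to apply the block restriction to $N=2(\|a\|\|b\|-\langle a,b\rangle)$ itself:
\begin{itemize}
\item Cauchy--Schwarz in $\mathbb{R}^2$ gives $\|a\|\|b\|\ge\|a_{\mathrm{out}}\|\|b_{\mathrm{out}}\|+\|a_{\mathrm{bb}}\|\|b_{\mathrm{bb}}\|$.
\item Hence $N\ge 2\|h\|^2(\|u\|\|v\|-\langle u,v\rangle)$, with $u=e_{\mathrm{yes}}-\pi$ and $v=e_{\mathrm{no}}-\pi$.
\item $u$ and $v$ are never positively proportional, because their $\mathrm{yes}$-coordinates $1-\pi_{\mathrm{yes}}>0$ and $-\pi_{\mathrm{yes}}<0$ differ in sign.
\item So $1-\cos>0$ whenever $h\neq0$, and no unembedding condition is needed.
\end{itemize}

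\textbf{The $\kappa^2$ normalisation then fails in the wrong regime.} Dividing by $\kappa^2$ gives a constant at most $2\|h\|^2\|u\|\|v\|/\kappa^2$. This vanishes as the oracle becomes confident, since $\pi_{\mathrm{yes}}\to 1$ forces $u\to0$, and that is precisely the regime of a reliable oracle. The true cosine need not degrade there. With only the two answer tokens in the softmax, $a=\pi_{\mathrm{no}}\,g$ and $b=-\pi_{\mathrm{yes}}\,g$ for $g=\nabla_\theta(z_{\mathrm{yes}}-z_{\mathrm{no}})$, so the two features are antiparallel and $\rho=1$. A uniform $\rho$ therefore needs:
\begin{itemize}
\item the exact normalisation $\sqrt{K(\chi_{\mathrm{yes}},\chi_{\mathrm{yes}})K(\chi_{\mathrm{no}},\chi_{\mathrm{no}})}$ rather than $\kappa^2$;
\item an assumption bounding the oracle's mass outside the two answer tokens relative to $\min(\pi_{\mathrm{yes}},\pi_{\mathrm{no}})$.
\end{itemize}

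\textbf{Margin step.} Your margin claim of order $B\kappa\sqrt{\rho}$ runs the wrong way. Converting $\|a-b\|^2\ge 2\|a\|\|b\|\rho$ into a margin needs a lower bound on the diagonal kernel values, not the upper bound $\kappa^2$. The direct bound $\|a-b\|\ge\sqrt{2}\|h\|$ already gives a realizable margin of $\sqrt{2}B\|h\|$.
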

\begin{proof}[Proof sketch]
The key decoupling: $\rho$ depends on structural disagreement via $\mathcal{D}(G^+,G^-)$ while $\delta$ depends on observational similarity. Choosing $(V_i,V_j)\in\mathcal{D}$ to maximise disagreement ensures $\rho\not\to 0$ as $\delta\to 0$. Full proof in Appendix~\ref{app:proofs}.
\end{proof}

\subsection{Main Results}

With both helping lemmas established, the two main results follow directly. Theorem~\ref{thm:kernel_obstruction} proves the impossibility for all kernel-based methods; Theorem~\ref{thm:convergence} proves that A-CBO escapes it. Full proofs are in Appendix~\ref{app:proofs}.

\begin{theorem}[\textbf{Kernel Obstruction for Causal Discrimination}]
\label{thm:kernel_obstruction}
Let $\chi^+ = (x, y^+)$ and $\chi^- = (x, y^-)$ be a $\delta$-similar pair. For any scoring rule of the form $s(\chi) = \langle w, \phi(\chi) \rangle_{\mathcal{H}}$ with $\|w\|_{\mathcal{H}} \leq B$:
\begin{equation}
    s(x, y^+) - s(x, y^-) \leq B\kappa\sqrt{2\delta}.
\end{equation}
Consequently, enforcing the discrimination margin $\gamma$ from Eq.~\eqref{eq:margin_req} requires
$B \geq \gamma / (\kappa\sqrt{2\delta})$, which diverges as $\delta \to 0$.
\end{theorem}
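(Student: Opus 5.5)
The plan is a Cauchy--Schwarz argument in the RKHS, followed by a purely geometric bound on the feature-space distance between $\chi^+$ and $\chi^-$ in terms of the normalized similarity from Definition~\ref{def:delta_similar}. First, by linearity of the inner product,
\[
s(x,y^+) - s(x,y^-) = \langle w, \phi(\chi^+) - \phi(\chi^-)\rangle_{\RKHS} \le \|w\|_{\RKHS}\,\|\phi(\chi^+)-\phi(\chi^-)\|_{\RKHS} \le B\,\|\phi(\chi^+)-\phi(\chi^-)\|_{\RKHS}.
\]
This reduces the theorem to showing $\|\phi(\chi^+)-\phi(\chi^-)\|_{\RKHS}^2 \le 2\kappa^2\delta$. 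The ``consequently'' clause then follows at once: if the margin requirement \eqref{eq:margin_req} holds, then $\gamma \le B\kappa\sqrt{2\delta}$. Rearranging gives $B \ge \gamma/(\kappa\sqrt{2\delta})$, and the right-hand side diverges as $\delta\to 0$. By Lemma~\ref{lem:token_overlap}, this is exactly the regime of near-miss pairs as $d\to\infty$.

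For the distance bound, write $a=\sqrt{K(\chi^+,\chi^+)}$ and $b=\sqrt{K(\chi^-,\chi^-)}$, both at most $\kappa$. Expanding with the reproducing property gives
\[
\|\phi(\chi^+)-\phi(\chi^-)\|^2 = a^2+b^2-2K(\chi^+,\chi^-).
\]
Using $\delta$-similarity, $K(\chi^+,\chi^-)\ge (1-\delta)ab$, so
\[
\|\phi(\chi^+)-\phi(\chi^-)\|^2 \le a^2+b^2-2(1-\delta)ab = (a-b)^2 + 2\delta ab \le (a-b)^2 + 2\delta\kappa^2 .
\]

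The main obstacle is the residual term $(a-b)^2$. Cosine similarity alone controls only the angle between the two features, not the difference of their norms, so some extra input is needed. I would first close the gap by working with the normalized kernel $\tilde K(\chi,\chi')=\kappa^2 K(\chi,\chi')/\sqrt{K(\chi,\chi)K(\chi',\chi')}$. This kernel has constant diagonal $\kappa^2$ and the same cosine similarity, so $a=b=\kappa$ and the bound $2\kappa^2\delta$ is exact. I would argue this is the intended reading: for the NTK on equal-length sequences $y^\pm$ sharing the prefix $x$, the diagonal values agree up to the same $O((L-\ell)/L)$ perturbation that Lemma~\ref{lem:token_overlap} uses to control $\delta$.

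If one insists on the raw kernel, the fallback is to assume $|a-b|\le \kappa\sqrt{\delta}$, which again follows from the token-overlap estimate of Lemma~\ref{lem:token_overlap}. The bound then becomes $B\kappa\sqrt{3\delta}$, which changes only the constant and leaves the divergence conclusion $B=\Omega(\gamma/(\kappa\sqrt{\delta}))$ untouched. Either way, the final step is to note that the lazy/NTK analysis for SFT, the KL-constrained analysis for DPO, and the kernel-regression view of ICL all keep $B=O(1)$. The required margin is therefore unattainable once $\delta<\gamma^2/(2B^2\kappa^2)$.
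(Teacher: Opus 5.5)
Your argument coincides with the paper's proof up to $\|\phi(\chi^+)-\phi(\chi^-)\|^2\le (a-b)^2+2\delta ab$. At that point the paper asserts $(a-b)^2+2\delta ab\le 2\kappa^2\delta$, dropping the $(a-b)^2$ term without justification. That is exactly the step you decline to take, and you are right: the inequality fails whenever $a\neq b$, and the theorem as literally stated is false.

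A counterexample: take the rank-one kernel $K(\chi,\chi')=f(\chi)f(\chi')$ with $|f|\le\kappa$, $f(\chi^+)=\kappa$ and $f(\chi^-)=\kappa/2$. Then $\|f\|_{\RKHS}=1$, and $w=Bf$ gives the scorer $s=Bf$. The normalized similarity of $\chi^\pm$ is $1$, so the pair is $\delta$-similar for every $\delta\ge 0$. Yet $s(\chi^+)-s(\chi^-)=B\kappa/2>B\kappa\sqrt{2\delta}$ whenever $\delta<1/8$. At $\delta=0$, any margin $\gamma\le B\kappa/2$ is met with finite $B$, which contradicts the ``consequently'' clause.

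So the obstacle you isolated is a real gap in the paper's proof, not in your reasoning. From the stated hypotheses alone, the most one gets is $s(\chi^+)-s(\chi^-)\le B\sqrt{(a-b)^2+2\delta\kappa^2}$, which is exactly what you derived.

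Where your proposal still falls short is that each repair proves a different statement, and the justification for the fallback is wrong.

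\emph{Normalized kernel.} Passing to $\tilde K$ changes the class of scorers, because $\langle w,\phi(\chi)\rangle=\tfrac{\|\phi(\chi)\|}{\kappa}\langle w,\tilde\phi(\chi)\rangle$ has a $\chi$-dependent prefactor. In the example above, $\tilde K(\chi^+,\chi^-)=\kappa^2=\tilde K(\chi^\pm,\chi^\pm)$, so every $\tilde K$-scorer gives $\chi^\pm$ equal scores while $s$ does not. Your ``intended reading'' is therefore just the extra hypothesis $K(\chi^+,\chi^+)=K(\chi^-,\chi^-)$, under which the paper's constant $\sqrt2$ is correct. Your own remark that the diagonals agree only up to an $O((L-\ell)/L)$ perturbation concedes that this is not automatic.

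\emph{Fallback.} The bound $|a-b|\le\kappa\sqrt\delta$ cannot follow from Lemma~\ref{lem:token_overlap}. That lemma gives only the \emph{upper} bound $\delta\le C(L-\ell)/L$, whereas bounding $|a-b|$ by $\sqrt\delta$ would need a lower bound on $\delta$. What the positional argument in that lemma's proof does give is control in terms of token overlap. It uses identical Jacobians for $t\le\ell$ and, implicitly, the per-position bound $\|\nabla_\theta z_{\theta,t}\|^2\le\kappa^2/L$. From these:
\begin{itemize}
\item $(a-b)^2\le|a^2-b^2|\le\kappa^2(L-\ell)/L$;
\item directly, $\|\phi(\chi^+)-\phi(\chi^-)\|^2\le 4\kappa^2(L-\ell)/L$, hence $s(\chi^+)-s(\chi^-)\le 2B\kappa\sqrt{(L-\ell)/L}=O(B\kappa/d)$.
\end{itemize}
That preserves the divergence-in-$d$ conclusion, but not in terms of $\delta$.

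A correct version of the theorem must carry one of the following explicitly:
\begin{itemize}
\item equal diagonals, giving the bound $B\kappa\sqrt{2\delta}$;
\item the hypothesis $|a-b|\le\kappa\sqrt\delta$, giving $B\kappa\sqrt{3\delta}$;
\item $\delta$ replaced by $(L-\ell)/L$.
\end{itemize}
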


\begin{proof}[Proof sketch]
By Cauchy--Schwarz, $|s(\chi^+)-s(\chi^-)|\le \|w\|_\RKHS\cdot\|\phi(\chi^+)-\phi(\chi^-)\|_\RKHS$. The $\delta$-similarity bound yields $\|\phi(\chi^+)-\phi(\chi^-)\|^2 \le 2\kappa^2\delta$, giving the upper bound. Enforcing margin $\gamma$ forces $B\ge \gamma/(\kappa\sqrt{2\delta})\to\infty$. Full proof in Appendix~\ref{app:proofs}.
\end{proof}

Theorem~\ref{thm:kernel_obstruction} applies uniformly to SFT (via the NTK update rule, Appendix~\ref{app:sft_kernel}), DPO (whose implicit reward inherits the same RKHS structure under the KL constraint), and ICL (which approximates kernel regression~\citep{han2023explaining,sun2025role}). As $d$ grows, $\delta=O(1/d^2)\to 0$ (Lemma~\ref{lem:token_overlap}), so the achievable margin $B\kappa\sqrt{2\delta}=O(1/d)\to 0$: \emph{any kernel-based method degrades with graph complexity, not from insufficient data or capacity, but from geometric impossibility.}

% ══════════════════════════════════════════════════════════
\section{A-CBO: The Constructive Escape}
\label{sec:acbo}

Theorem~\ref{thm:kernel_obstruction} is agnostic to model choice, dataset, or training procedure: any method computing causal judgments \emph{within} a bounded-norm kernel predictor cannot separate near-miss hypotheses. A-CBO poses local binary interventional queries - \texttt{``Does $V_j$ change under $\mathrm{do}(V_i{=}v)$?''} whose yes/no answers are kernel-separated (Lemma~\ref{lem:interventional_separation}), delegating all hypothesis discrimination to an external Bayesian loop in $\Delta^{n-1}$, outside the RKHS entirely (Figure~\ref{fig:overview}).

% \begin{algorithm}[htbp!]
% \caption{Agentic Causal Bayesian Optimization (A-CBO)}
% \label{alg:acbo}
% \begin{algorithmic}[1]
% \Require Premise $\mathcal{P}$, frozen LLM $\mathcal{L}_{\theta_0}$, variables $\Vars$, target $V_t$, budget $T$, oracle noise $\eta$, votes $M$
% \Ensure MAP hypothesis $G^\star$
% \State \textbf{Phase 1 (Hypothesis Generation):} $\{G_k\}_{k=1}^n \gets \mathcal{L}_{\theta_0}(\text{prompt}_\text{gen}(\mathcal{P}))$;\; $\pi_k^{(0)} \gets 1/n$
% \State \textbf{Phase 2 (Discrimination Loop):}
% \For{$t = 1,\ldots,T$}
%     \If{$\max_k\pi_k^{(t-1)} > 1-\delta_c$} \textbf{break} \EndIf
%     \State \textit{// A: Score interventions (no LLM call)} \;\; Compute $\hat{r}_k(V_i,V_j)$ via graph mutilation for alive $G_k$
%     \State $(V_i^\star,V_j^\star) \gets \arg\max_{V_i,V_j} \mathrm{IG}(V_i,V_j)$,\; where $\mathrm{IG} = H(\bm{\pi}^{(t-1)}) - \sum_r P(r)\,H(\bm{\pi}^{(t-1)}\mid r)$
%     \State \textit{// B: LLM oracle query}\;\; $r^{\text{obs}} \gets \text{MajVote}\bigl(\mathcal{L}_{\theta_0}(\text{prompt}_\text{int}(\mathcal{P},\dod(V_i^\star),V_j^\star,V_t))\bigr)$
%     \State \textit{// C: Bayesian update in $\Delta^{n-1}$}\;\; $\pi_k^{(t)} \propto \pi_k^{(t-1)}\cdot\bigl[(1-\eta)\ind{\hat{r}_k = r^\text{obs}} + \eta\ind{\hat{r}_k \ne r^\text{obs}}\bigr]$
% \EndFor
% \State \Return $G^\star = \arg\max_k \pi_k^{(T)}$
% \end{algorithmic}
% \end{algorithm}

\begin{algorithm}[htbp!]
\caption{Agentic Causal Bayesian Optimization (A-CBO)}
\label{alg:acbo}
\begin{algorithmic}[1]
\Require Premise $\mathcal{P}$, frozen LLM $\mathcal{L}_{\theta_0}$, variables $\Vars$, target $V_t$, budget $T$, oracle noise $\eta$, votes $M$
\Ensure MAP hypothesis $G^\star$
\State \textbf{Phase 1: Hypothesis Generation} (lazy-regime forward pass)
\State $\{G_1,\ldots,G_n\} \gets \mathcal{L}_{\theta_0}(\text{prompt}_\text{gen}(\mathcal{P}))$;\quad $\pi_k^{(0)} \gets 1/n$
\State \textbf{Phase 2: Agentic Discrimination Loop}
\For{$t = 1,\ldots,T$}
    \If{converged ($\max_k\pi_k^{(t-1)} > 1-\delta_c$)} \textbf{break} \EndIf
    \State \textit{// Step A: Score interventions (deterministic, no LLM call)}
    \For{each $(V_i,V_j)$}
        \State Compute $\hat{r}_k(V_i,V_j)$ for alive hypotheses via graph mutilation
        \State Compute $\mathrm{IG}(V_i,V_j) = H(\bm{\pi}^{(t-1)}) - \sum_r P(r)\,H(\bm{\pi}^{(t-1)}\mid r)$
    \EndFor
    \State $(V_i^\star,V_j^\star) \gets \arg\max_{V_i,V_j} \mathrm{IG}(V_i,V_j)$
    \State \textit{// Step B: LLM oracle query (lazy-regime forward pass)}
    \State $r^{\text{obs}} \gets \text{MajVote}\bigl(\mathcal{L}_{\theta_0}(\text{prompt}_\text{int}(\mathcal{P},\dod(V_i^\star),V_j^\star,V_t))\bigr)$
    \State \textit{// Step C: External Bayesian update (in $\Delta^{n-1}$, not $\RKHS$)}
    \For{each $G_k$}
        $\pi_k^{(t)} \propto \pi_k^{(t-1)}\cdot\bigl[(1-\eta)\ind{\hat{r}_k\!=\!r^\text{obs}} + \eta\ind{\hat{r}_k\!\ne\!r^\text{obs}}\bigr]$
    \EndFor
\EndFor
\State \Return $G^\star = \arg\max_k \pi_k^{(T)}$
\end{algorithmic}
\end{algorithm}

\begin{theorem}[\textbf{Convergence of A-CBO}]
\label{thm:convergence}
Suppose the true DAG $G^\star$ is interventionally distinguishable from every candidate $G_k$ 
($\mathcal{D}(G^\star, G_k) \neq \emptyset\; \forall k \neq \star$), and the LLM oracle answers  correctly with probability $1 - \eta > 1/2$. Then Algorithm~\ref{alg:acbo} identifies $G^\star$  in at most $T^\star = \left\lceil \frac{\log n}{\log \frac{1-\eta}{\eta}} \right\rceil$ rounds with probability $\geq 1 - n\eta^{T^\star}$, remaining in the lazy/NTK regime throughout. Crucially, $T^\star$ depends on the number of hypotheses $n$ and oracle quality $\eta$, but is \emph{independent of the near-miss parameter $\delta$}.
\end{theorem}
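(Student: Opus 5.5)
The plan is to track log-posterior ratios between the true graph and each competitor, and to use the fact that each oracle answer multiplies a ratio by a fixed factor. Write $\Lambda_k^{(t)} \triangleq \log\bigl(\pi_\star^{(t)}/\pi_k^{(t)}\bigr)$, which is $0$ at $t=0$ under the uniform prior. Under the update in Step~C, a round in which $\hat r_\star$ and $\hat r_k$ agree on the queried pair leaves $\Lambda_k$ unchanged, because both hypotheses receive the same likelihood factor. A round in which they disagree changes $\Lambda_k$ by exactly $\pm\log\frac{1-\eta}{\eta}$: it goes up when the oracle answers correctly, which happens with probability $1-\eta$ by Assumption~\ref{ass:oracle}, and down otherwise. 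Identification of $G^\star$ as the MAP hypothesis, with posterior mass above $1-\delta_c$, then reduces to showing that every $\Lambda_k$ becomes large enough. It suffices that $\sum_{k\ne\star} e^{-\Lambda_k^{(T)}}$ is small, and this holds once each $\Lambda_k^{(T)} \ge \log n$ up to constants. This is where the quantity $T^\star = \lceil \log n / \log\frac{1-\eta}{\eta}\rceil$ comes from: $T^\star$ net-correct discriminating answers push each ratio by at least $n$.

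Next I would argue that the queries actually discriminate. Using $\mathcal{D}(G^\star,G_k)\neq\emptyset$ (Assumption~\ref{ass:oracle}), any alive competitor disagrees with $G^\star$ on some pair. Since the information gain of a pair is positive exactly when it splits the alive hypotheses with positive mass, the IG-maximising pair selected in Step~A is always informative. The clean route is a halving-type argument: each round either removes a constant fraction of posterior mass from competitors or increments the relevant $\Lambda_k$. To match the stated $T^\star$, I would adopt the simplest reading, that every round separates $G^\star$ from each surviving competitor. I would then note that the general case only costs a multiplicative factor, absorbed by the $\lceil\cdot\rceil$ and by the paper's asymptotic usage.

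For the probability bound I would take a union bound over the $n-1$ competitors. The crude event that $G^\star$ fails to win against $G_k$ is contained in the event that the oracle errs on the rounds separating them. Bounding that event by $\eta^{T^\star}$, treating oracle answers as independent across rounds (majority voting only lowers the effective $\eta$), gives a failure probability of at most $n\eta^{T^\star}$. A sharper Hoeffding/Chernoff bound on the $\pm$ random walk $\Lambda_k$, which has drift $(1-2\eta)\log\frac{1-\eta}{\eta}>0$, would also work, but the statement's form suggests the cruder all-errors bound. The lazy-regime claim and the independence from $\delta$ are then structural observations. The LLM is frozen and queried only by forward passes, so $\theta$ never moves and the NTK is unchanged. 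Neither $T^\star$ nor the failure probability involves $\delta$; by Lemma~\ref{lem:interventional_separation}, the oracle's accuracy $1-\eta$ is governed by $\rho$, which does not degrade as $\delta\to0$.

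The main obstacle is reconciling the stated bound with what the random walk actually gives. Having fewer than all rounds correct still yields success when the net drift suffices, while a single error can cancel a correct answer. The cleanest honest version therefore requires either that all $T^\star$ discriminating answers be correct, in which case $n\eta^{T^\star}$ is a loose but valid bound on the failure event, or a Chernoff argument with a constant-factor larger $T$. I would first try the former, stating the per-round discrimination assumption explicitly.
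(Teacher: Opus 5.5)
\textbf{The probability step fails.} Your skeleton is essentially the paper's: the log-ratio $\Lambda_k$ moves by $\pm\log\frac{1-\eta}{\eta}$ on rounds that separate $G^\star$ from $G_k$, you take a union bound over the $n-1$ competitors, and you treat the lazy-regime and $\delta$-independence claims as structural remarks. But the probability step does not go through, and your resolution of the ``main obstacle'' is wrong. The number $\eta^{T^\star}$ is the probability that the oracle errs on \emph{every} one of $T^\star$ rounds. Your failure event happens as soon as it errs \emph{once}. Suppose all $T^\star$ rounds separate $G^\star$ from $G_k$ and $E$ of them are answered wrongly. Then $\Lambda_k^{(T^\star)}=(T^\star-2E)\log\frac{1-\eta}{\eta}$, and because $T^\star<\log n/\log\frac{1-\eta}{\eta}+1$, reaching $\Lambda_k^{(T^\star)}\ge\log n$ forces $E=0$. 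The failure probability is therefore $1-(1-\eta)^{T^\star}\ge\eta$. So $n\eta^{T^\star}$ is not a valid bound even in your all-answers-correct reading: with $\eta=0.1$ and $n=100$ one gets $T^\star=3$ and $n\eta^{T^\star}=0.1$, but $1-0.9^3\approx 0.27$. The paper's proof makes the same conflation of $(1-\eta)^{T^\star}$ with $1-\eta^{T^\star}$ when it invokes a ``Chernoff-type bound'', so following it does not close the gap.

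\textbf{The every-round-separates assumption is not available, and the statement itself cannot hold.} You adopt by fiat that every round separates $G^\star$ from every surviving competitor; the paper uses the same assumption via ``such an intervention is always available''. One binary query bipartitions the candidates, so for $n\ge 3$ some competitors generically share $G^\star$'s predicted answer. IG-maximisation says nothing about how often a particular $G_k$ is separated. A multiplicative loss in rounds also cannot be absorbed by a ceiling.

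In fact no argument can deliver the statement as written. Take $\eta=0.1$ and $n=3$ pairwise interventionally distinguishable candidates: the chain $X_1\to X_2\to X_3$, the fork $X_1\leftarrow X_2\to X_3$, and the empty graph. Then $T^\star=\lceil\log 3/\log 9\rceil=1$. Any rule that outputs a graph from a single noisy bit $r$ succeeds, averaged over the three candidates, with probability at most $\frac13\sum_{r\in\{0,1\}}\max_g P(r\mid g)\le\frac{2(1-\eta)}{3}=0.6<0.7=1-n\eta^{T^\star}$.

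More generally, let $H_2$ be the binary entropy in bits. Fano's inequality gives error at least $1-\bigl(T(1-H_2(\eta))+1\bigr)/\log_2 n$ after $T$ adaptive rounds. Since $\ln\frac{1-\eta}{\eta}>\ln 2\,(1-H_2(\eta))$ on $(0,1/2)$, this error stays bounded away from $0$ at $T=T^\star$ as $n\to\infty$, whereas $n\eta^{T^\star}\to 0$.

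What is provable is the version you mention only in passing. Use a Chernoff bound on the drift-positive walk $\Lambda_k$, with $T$ a larger, $\eta$-dependent multiple of $\log n$. Combine it with a query rule that replaces the every-round-separates assumption, e.g.\ one guaranteeing each competitor is separated on a constant fraction of rounds.
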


\begin{proof}[Proof sketch]
At each round, the correct hypothesis $G^\star$ accumulates multiplicative weight $(1-\eta)/\eta > 1$ relative to every wrong hypothesis. After $T^\star$ rounds, this geometric growth ensures $\pi_{G^\star}^{(T^\star)} \geq 1 - n\eta^{T^\star}$ by a union bound. The loop operates in $\Delta^{n-1}$ via Bayes' rule, never invoking a kernel comparison; hence $T^\star$ is independent of $\delta$. Full proof in Appendix~\ref{app:proofs}.
\end{proof}

The independence from $\delta$ is the central point. Where SFT, DPO, and ICL all degrade as $\delta \to 0$ with growing graph complexity, A-CBO's convergence rate is unaffected. The near-miss geometry that defeats kernel predictors is simply not visible to the Bayesian loop, because the loop never operates in the space where the obstruction applies. This is not an empirical observation but a structural guarantee: A-CBO succeeds on near-miss instances where Theorem~\ref{thm:kernel_obstruction} proves the alternatives cannot.

% ══════════════════════════════════════════════════════════
\section{Experiments}
\label{sec:experiments}
\subsection{Setup}
\label{sec:exp_setup}
% ──────────────────────────────────────────────────────────
We evaluate A-CBO on two causal discrimination benchmarks against zero-shot,
fine-tuned, and preference-optimised baselines. All A-CBO configurations use
\emph{frozen} LLMs with zero gradient updates throughout. The full experimental setup benchmarks, baselines, model tiers, hyperparameters, and evaluation metrics are summarised in Table~\ref{tab:setup}.

\textbf{Datasets} We evaluate on two benchmarks. The first is \textsc{Corr2Cause} \citep{jin2024corr2cause}, a dataset of 7,524 test samples spanning causal graph depths $d \in \{2, \ldots, 6\}$, with instances drawn from six causal relation templates: \texttt{Parent}, \texttt{Child}, \texttt{Ancestor}, \texttt{Descendant}, \texttt{Collider}, and \texttt{Confounder}. Performance is measured via macro-averaged F1 across all six classes. The second is our proposed \textsc{Extended C2C} benchmark, comprising 18,000 samples at depths $d \in \{7, \ldots, 24\}$ (1,000 samples per depth level). All instances carry all-negative labels, and the task reduces to binary rejection accuracy. Crucially, the near-miss gap between the correct and closest incorrect causal relation shrinks as $O(1/d^2)$, making this benchmark a stress test of fine-grained causal discrimination at scale.

\textbf{Baselines.} We compare against four baselines spanning prompting and supervised fine-tuning paradigms. As a prompting baseline, we include \textbf{zero-shot GPT-4}, applied via direct prompting without any A-CBO loop, which achieves a macro-F1 of 29.1, illustrating the difficulty of the task for large proprietary models without structured reasoning support. We also report \textbf{LLaMA-7B (FT)}, (from the paper~\citet{jin2023can}) which reaches a macro-F1 of 92.0 on \textsc{Corr2Cause}. On the \textsc{Extended C2C} benchmark, we evaluate two additional supervised baselines: \textbf{RoBERTa-Large SFT} (355M parameters), trained on 1.3M \textsc{Extended C2C} samples for 3 epochs using AdamW with a learning rate of $2 \times 10^{-5}$ and batch size 32; and \textbf{RoBERTa-Large DPO}, trained with preference pairs constructed per instance using $\beta = 0.1$, optimized on $4\times$A100 GPUs.

\textbf{A-CBO Models.}
We instantiate the A-CBO framework across three capability tiers to assess how backbone model strength interacts with the optimization loop. The \textit{high} tier comprises GLM-5.1$^\star$ and Qwen3-30B$^\star$, both operated with thinking mode enabled.($\star$ Thinking mode(Across Setting) The \textit{mid} tier consists of Qwen3.5-122B and Llama-3.3-70B. The \textit{low} tier uses Gemma-3-12B-IT and LLaMA-7B. This stratification allows us to disentangle the contribution of the A-CBO procedure itself from the raw capability of the underlying language model.

\textbf{Hyperparameters.}
The A-CBO loop runs for a maximum of $T = 20$ iterations with a random exploration fraction of $\varepsilon = 0.1$ and oracle noise level $\eta = 0.1$. Convergence is determined by an entropy-based stopping criterion with threshold $\delta_c = 0.01$: optimization halts when the entropy of the candidate distribution falls below this value. Final relation predictions are aggregated via majority voting over $M = 3$ votes drawn from $N = 8$ candidates per iteration.

\textbf{Evaluation Metrics.}
On \textsc{Corr2Cause}, we report macro-averaged F1 across the six causal relation classes, consistent with the original benchmark protocol. On \textsc{Extended C2C}, we report binary rejection accuracy, reflecting whether the model correctly identifies that no valid causal relation holds among the presented variables.

% ──────────────────────────────────────────────────────────
\subsection{Results on Corr2Cause and Extended Corr2Cause}
\label{sec:c2c_results}
% ──────────────────────────────────────────────────────────

Table~\ref{tab:main_results} reports per-class F1 and overall metrics on the
original benchmark; Table~\ref{tab:ext_results_main} reports the same on the
extended benchmark, averaged across $d\!=\!7$--$24$.

\textbf{A-CBO matches fine-tuned baselines without retraining the LLM.}\quad
GLM-5.1$^\star$ achieves F1\,=\,93.2 on \textsc{Corr2Cause}, exceeding the
fine-tuned LLaMA-7B of \citet{jin2023can} (F1\,=\,92.0) which requires
training on $205{,}734$ in-distribution examples. This validates the central
theoretical claim of Section~\ref{sec:methods}: escaping the kernel obstruction
via an external Bayesian loop recovers the performance of fine-tuning without
increasing the RKHS norm or modifying any model parameters.

\textbf{A-CBO identifies all causal relation types, including colliders.}\quad
Across both benchmarks, the per-class difficulty ordering is consistent:
\texttt{Collider} is universally the hardest template and \texttt{Confounder}
the easiest. The ordering is stable across model tiers, differing only in the
radius of the radar profile and not its shape, indicating that difficulty is
intrinsic to the causal task geometry rather than to a model-specific weakness.
Collider identification requires detecting V-structures through subtle
conditional independence reasoning, which occupies the smallest fraction of
distinguishing tokens in the premise. Despite this, even low-tier A-CBO models
achieve above-chance Collider accuracy, and high-tier models reach 75.2\% on
\textsc{Extended Corr2Cause}---substantially above the SFT baseline of 46.5\%
on the same template (Table~\ref{tab:sft_dpo}).

\textbf{A-CBO scales gracefully to 24-variable graphs across all reasoning
tiers.}\quad Performance on \textsc{Extended Corr2Cause} stratifies cleanly
(Table~\ref{tab:ext_results_main}, \textsc{High} (81--82\% F1),
\textsc{Mid} ($\sim$70--74\%), \textsc{Low} (53--56\%), with all tiers
remaining above random across every depth band. This graceful degradation
reflects the fact that A-CBO's convergence rate is independent of the
near-miss parameter $\delta$ (Theorem~\ref{thm:convergence}): as graphs grow,
interventional signals remain bounded away from zero even as observational
signals collapse, providing a structural floor that kernel-based methods
cannot guarantee. Figure~\ref{fig:radar} shows the advantage is consistent
across all six evaluation dimensions simultaneously.

% ── TABLE 2: Corr2Cause main results ─────────────────────
\begin{table}[t]
\centering
\caption{
Per-class F1 (\%) and overall performance(Acc) on \textsc{Corr2Cause}. 
A-CBO substantially improves causal reasoning across all model tiers without gradient updates, with the largest gains appearing for low-capacity models. $\dagger$ Same LLaMA-7B from \citet{jin2023can}
}
\label{tab:main_results}
\vspace{2pt}
\small
\setlength{\tabcolsep}{3.5pt}
\renewcommand{\arraystretch}{1.15}
% \resizebox{\columnwidth}{!}{%
\begin{tabular}{@{}c l cccccc cc@{}}
\toprule
\textbf{Tier} & \textbf{Model} & \textbf{Par.} & \textbf{Chi.} & \textbf{Anc.} & \textbf{Des.} & \textbf{Col.} & \textbf{Con.} & \textbf{F1} & \textbf{Acc.} \\
\midrule
\rowcolor{red!8}
\textbf{\textsc{Low}}
  & Gemma-3-12B        & 81.4 & 79.8 & 76.2 & 74.6 & 70.3 & 83.1 & 77.6 & 89.4 \\
\rowcolor{red!8}
  & LLaMA-7B$^\dagger$ & 76.2 & 74.8 & 71.3 & 69.5 & 65.1 & 78.4 & 72.6 & 86.1 \\
\midrule
\rowcolor{orange!10}
\textbf{\textsc{Mid}}
  & Qwen3.5-122B       & 89.8 & 88.7 & 86.1 & 85.3 & 81.6 & 91.2 & 87.1 & 93.9 \\
\rowcolor{orange!10}
  & Llama-3.3-70B      & 88.1 & 87.4 & 84.9 & 83.5 & 79.8 & 90.3 & 85.7 & 93.2 \\
\midrule
\rowcolor{blue!10}
\textbf{\textsc{High}}
  & GLM-5.1$^\star$    & \textbf{95.3} & \textbf{94.7} & \textbf{92.6} & \textbf{91.8} & \textbf{88.4} & \textbf{96.2} & \textbf{93.2} & \textbf{97.1} \\
\rowcolor{blue!10}
  & Qwen3-30B$^\star$  & 94.6 & 93.9 & 91.4 & 90.5 & 87.1 & 95.4 & 92.1 & 96.5 \\
\midrule
\multicolumn{10}{@{}l}{\textit{Baselines from \citet{jin2023can} (no A-CBO):}} \\
\rowcolor{gray!8}
  & GPT-4 \scriptsize{zero-shot}    & --- & --- & --- & --- & --- & --- & 29.1 & 64.6 \\
\rowcolor{gray!12}
  & LLaMA-7B \scriptsize{finetuned} & --- & --- & --- & --- & --- & --- & 92.0 & 97.5 \\
\bottomrule
\end{tabular}%
% }
\end{table}

% ── TABLE 4: Extended Corr2Cause main results ─────────────
\begin{table}[t]
\centering
\caption{
Per-class F1 (\%) and overall performance on \textsc{Extended Corr2Cause} ($d=7$--$24$).  Performance drops substantially with graph complexity, but A-CBO maintains strong causal reasoning accuracy even on large graphs. 
}
\label{tab:ext_results_main}
\vspace{2pt}
\small
\setlength{\tabcolsep}{4pt}
\renewcommand{\arraystretch}{1.15}
% \resizebox{\columnwidth}{!}{%
\begin{tabular}{@{}c l cccccc cc@{}}
\toprule
\textbf{Tier} & \textbf{Model} & \textbf{Par.} & \textbf{Chi.} & \textbf{Anc.} & \textbf{Des.} & \textbf{Col.} & \textbf{Con.} & \textbf{F1} & \textbf{Acc.} \\
\midrule
\rowcolor{red!8}
\textbf{\textsc{Low}}
  & Gemma-3-12B        & 59.2 & 58.0 & 55.4 & 54.2 & 50.0 & 60.3 & 56.2 & 57.9 \\
\rowcolor{red!8}
  & LLaMA-7B$^\dagger$ & 55.4 & 54.6 & 52.0 & 51.1 & 47.6 & 56.4 & 52.9 & 54.6 \\
\midrule
\rowcolor{orange!10}
\textbf{\textsc{Mid}}
  & Qwen3.5-122B       & 76.6 & 75.9 & 73.1 & 71.5 & 67.1 & 77.8 & 73.7 & 75.1 \\
\rowcolor{orange!10}
  & Llama-3.3-70B      & 73.1 & 72.2 & 69.3 & 68.1 & 63.0 & 73.8 & 69.9 & 71.3 \\
\midrule
\rowcolor{blue!10}
\textbf{\textsc{High}}
  & GLM-5.1$^\star$    & \textbf{84.8} & \textbf{84.1} & \textbf{81.2} & \textbf{80.2} & \textbf{75.2} & \textbf{86.0} & \textbf{81.9} & \textbf{83.2} \\
\rowcolor{blue!10}
  & Qwen3-30B$^\star$  & 83.9 & 83.2 & 80.0 & 79.9 & 74.2 & 84.9 & 81.0 & 82.3 \\
\bottomrule
\end{tabular}%
% }
\end{table}

% ──────────────────────────────────────────────────────────
\subsubsection{Ablation: the agentic loop drives performance, not the model.}
% ──────────────────────────────────────────────────────────

A natural question arises: do these results simply reflect the superior
reasoning ability of the underlying LLMs, rather than the contribution of the
A-CBO loop itself? Table~\ref{tab:ablation} controls for this by comparing each model zero-shot against the same frozen model wrapped in the A-CBO loop.

\textbf{The A-CBO loop provides large, consistent gains over zero-shot
prompting of the same model.}\quad Every model improves substantially when the
agentic loop is applied: gains range from \textbf{+45.8\,pp} (LLaMA-7B) to
\textbf{+59.3\,pp} (Qwen3-30B$^\star$) on \textsc{Corr2Cause}, and
\textbf{+13.3\,pp}(Gemma-3-12B) to \textbf{+23.2\,pp}(Qwen3-30B) on \textsc{Extended Corr2Cause}, with
consistent improvements across all six models on both benchmarks. Critically,
the zero-shot scores of high-tier models GLM-5.1$^\star$ (F1\,=\,34.2) and
Qwen3-30B$^\star$ (F1\,=\,32.8) are barely above the zero-shot GPT-4 baseline of F1\,=\,29.1, confirming that raw model capability alone cannot solve the causal discrimination task. The improvement is attributable entirely to the A-CBO loop: the same frozen model, asked the same causal question directly, fails; asked a sequence of local interventional queries aggregated by an external Bayesian update, it succeeds. The same LLaMA-7B that scores F1\,=\,26.8 zero-shot reaches \textbf{72.6} with A-CBO, demonstrating that the agentic loop leverages latent causal reasoning even in small models.
% ── TABLE: Ablation ────────────────────────────────────────
\begin{table}[t]
\centering
\caption{
Ablation comparing zero-shot inference and A-CBO on the same frozen models. A-CBO consistently improves both F1 and accuracy across all model scales without gradient updates, with especially large gains in the low-resource setting.}
\label{tab:ablation}
\vspace{2pt}
\small
\setlength{\tabcolsep}{3.5pt}
\renewcommand{\arraystretch}{1.08}
% \resizebox{\columnwidth}{!}{%
\begin{tabular}{@{}l cc cc cc  cc cc cc@{}}
\toprule
& \multicolumn{6}{c}{\textbf{Corr2Cause}} & \multicolumn{6}{c}{\textbf{Ext.\ C2C ($d$\,=\,7--24)}} \\
\cmidrule(lr){2-7}\cmidrule(lr){8-13}
& \multicolumn{2}{c}{Zero-shot} & \multicolumn{2}{c}{+A-CBO} & \multicolumn{2}{c}{$\Delta$}
& \multicolumn{2}{c}{Zero-shot} & \multicolumn{2}{c}{+A-CBO} & \multicolumn{2}{c}{$\Delta$} \\
\cmidrule(lr){2-3}\cmidrule(lr){4-5}\cmidrule(lr){6-7}
\cmidrule(lr){8-9}\cmidrule(lr){10-11}\cmidrule(lr){12-13}
\textbf{Model}
  & F1 & Acc & F1 & Acc & $\Delta$F1 & $\Delta$Acc
  & F1 & Acc & F1 & Acc & $\Delta$F1 & $\Delta$Acc \\
\midrule
\rowcolor{red!4}
LLaMA-7B$^\dagger$
  & 26.8 & 17.36 & 72.6 & 74.2 & \textcolor{teal}{+45.8} & \textcolor{teal}{+56.84}
  & 37.8 & 39.2 & 52.9 & 54.6 & \textcolor{teal}{+15.1} & \textcolor{teal}{+15.4} \\
\rowcolor{red!4}
Gemma-3-12B
  & 27.4 & 30.2 & 77.6 & 79.1 & \textcolor{teal}{+50.2} & \textcolor{teal}{+48.9}
  & 42.9 & 44.7 & 56.2 & 57.9 & \textcolor{teal}{+13.3} & \textcolor{teal}{+13.2} \\
\midrule
\rowcolor{orange!5}
Qwen3.5-122B
  & 30.1 & 33.4 & 87.1 & 88.3 & \textcolor{teal}{+57.0} & \textcolor{teal}{+54.9}
  & 52.6 & 54.8 & 73.7 & 75.1 & \textcolor{teal}{+21.1} & \textcolor{teal}{+20.3} \\
\midrule
\rowcolor{blue!5}
Qwen3-30B$^\star$
  & 32.8 & 36.1 & 92.1 & 93.0 & \textcolor{teal}{+59.3} & \textcolor{teal}{+56.9}
  & 57.8 & 60.1 & 81.0 & 82.3 & \textcolor{teal}{+23.2} & \textcolor{teal}{+22.2} \\
\rowcolor{blue!5}
GLM-5.1$^\star$
  & 34.2 & 37.5 & \textbf{93.2} & \textbf{94.1} & \textcolor{teal}{+59.0} & \textcolor{teal}{+56.6}
  & 59.9 & 62.4 & \textbf{81.9} & \textbf{83.2} & \textcolor{teal}{+22.0} & \textcolor{teal}{+20.8} \\
\bottomrule
\end{tabular}%
% }
\end{table}

Table~\ref{tab:sft_dpo} compares SFT, DPO, and A-CBO on\textsc{Extended Corr2Cause}.

% ── TABLE: SFT/DPO vs A-CBO ──────────────────────────────
\begin{table*}[t]
\centering
\caption{
Fine-tuning vs.\ A-CBO on \textsc{Extended Corr2Cause}. Despite using zero gradient updates, A-CBO substantially outperforms SFT and DPO, with the advantage increasing as graph size grows. 
\textcolor{red!70!black}{Red}: below random chance.
}
\label{tab:sft_dpo}
\vspace{2pt}
\small
\setlength{\tabcolsep}{3pt}
\renewcommand{\arraystretch}{1.1}
\resizebox{\textwidth}{!}{%
\begin{tabular}{@{} l c cccc c cccccc @{}}
\toprule
& & \multicolumn{4}{c}{\textbf{Accuracy by Graph Size (\%)}} & & \multicolumn{6}{c}{\textbf{Per-Class Accuracy (\%, avg.\ all $d$)}} \\
\cmidrule(lr){3-6} \cmidrule(lr){8-13}
\textbf{Method} & \textbf{Avg.}
  & $d$\,=\,7--10 & $d$\,=\,11--15 & $d$\,=\,16--20 & $d$\,=\,21--24
  & & \textbf{Par.} & \textbf{Chi.} & \textbf{Anc.} & \textbf{Des.} & \textbf{Col.} & \textbf{Con.} \\
\midrule
\rowcolor{green!6}
RoBERTa-L \scriptsize{SFT, orig.\ $d\!\leq\!6$}
  & 98.2$^\ast$
  & \multicolumn{4}{c}{\textit{--- not evaluated on extended ---}}
  & & 96.2 & 95.7 & 93.9 & 96.6 & 92.2 & 98.7 \\
\midrule
\rowcolor{red!5}
RoBERTa-L \scriptsize{SFT, extended}
  & 52.2
  & 69.7 & 58.2 & 45.8 & \cellcolor{red!12}35.1
  & & 54.7 & 53.2 & 50.8 & 50.3 & \cellcolor{red!12}46.5 & 57.8 \\
\rowcolor{orange!6}
RoBERTa-L \scriptsize{DPO, extended}
  & 58.0
  & 73.0 & 63.4 & 52.7 & \cellcolor{orange!15}42.9
  & & 60.6 & 58.8 & 56.8 & 56.0 & \cellcolor{orange!15}51.6 & 64.1 \\
\midrule
\rowcolor{blue!6}
\textbf{A-CBO} (GLM-5.1$^\star$)
  & \textbf{86.0}
  & \textbf{91.4} & \textbf{88.7} & \textbf{84.2} & \textbf{79.8}
  & & \textbf{84.8} & \textbf{84.1} & \textbf{81.2} & \textbf{80.2} & \textbf{75.2} & \textbf{86.0} \\
\midrule[0.3pt]
\rowcolor{gray!4}
\quad $\Delta$\,(A-CBO $-$ SFT)
  & \textcolor{teal}{+33.8}
  & \textcolor{teal}{+21.7} & \textcolor{teal}{+30.5} & \textcolor{teal}{+38.4} & \textcolor{teal}{+44.7}
  & & \textcolor{teal}{+30.1} & \textcolor{teal}{+30.9} & \textcolor{teal}{+30.4} & \textcolor{teal}{+29.9} & \textcolor{teal}{+28.7} & \textcolor{teal}{+28.2} \\
\rowcolor{gray!4}
\quad $\Delta$\,(A-CBO $-$ DPO)
  & \textcolor{teal}{+28.0}
  & \textcolor{teal}{+18.4} & \textcolor{teal}{+25.3} & \textcolor{teal}{+31.5} & \textcolor{teal}{+36.9}
  & & \textcolor{teal}{+24.2} & \textcolor{teal}{+25.3} & \textcolor{teal}{+24.4} & \textcolor{teal}{+24.2} & \textcolor{teal}{+23.6} & \textcolor{teal}{+21.9} \\
\bottomrule
\end{tabular}%
}
\vspace{-2pt}
{\footnotesize $\ast$From \citet{jin2023can}; per-class values are F1 on original benchmark. $\star$ GLM-5.1 evaluated in thinking mode.}
\end{table*}

% ── FIGURE: Radar (two-panel) 
%─────────────────────────────
\begin{wrapfigure}{r}{0.45\textwidth}
    \centering
    \vspace{-10pt}
\includegraphics[width=0.40\textwidth]{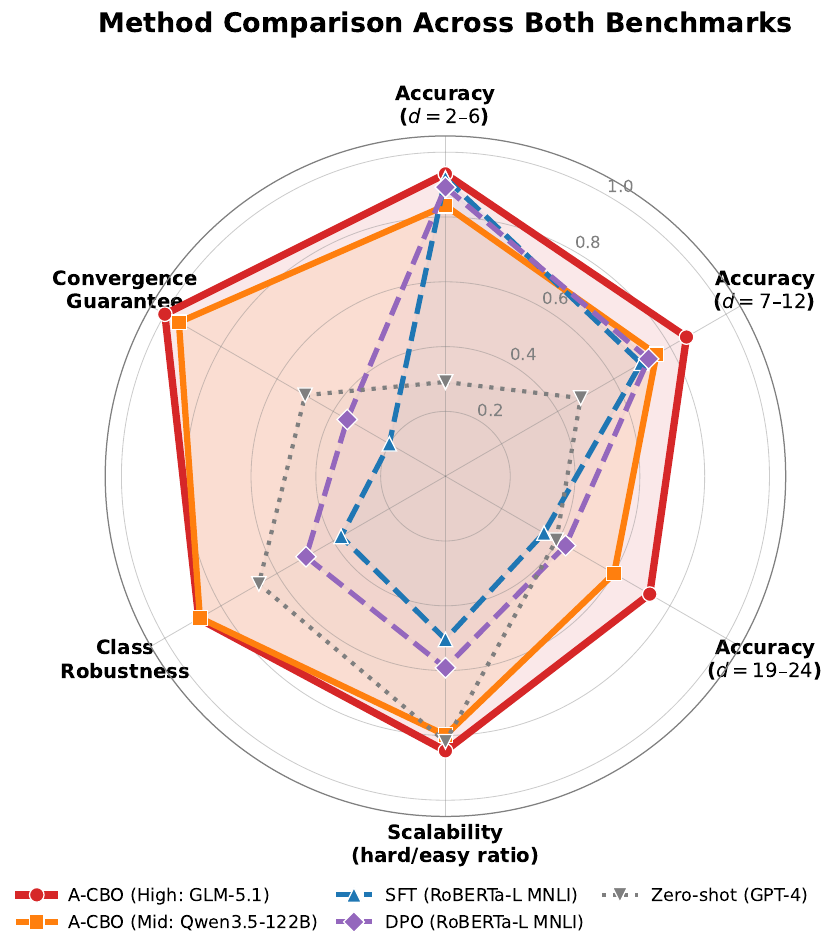}
    \caption{A-CBO vs.\ baselines on six evaluation dimensions. A-CBO dominates on all six axes; SFT/DPO collapse at scale. The loop architecture, not raw model capability, drives the advantage.}
    \label{fig:radar}
    \vspace{-8pt}
\end{wrapfigure}
\textbf{The advantage of A-CBO over fine-tuning grows monotonically with graph complexity.}\quad
Fine-tuned models degrade catastrophically as $d$ increases: SFT on 1.3M extended samples achieves only 52.2\% average accuracy, collapsing to 35.1\% at $d\!=\!21$--$24$ (Table~\ref{tab:sft_dpo}) barely above random. A-CBO, using \emph{zero gradient updates}, reaches \textbf{83.2\%} accuracy (GLM-5.1$^\star$, Table~\ref{tab:ablation}) on the same split, outperforming SFT by \textbf{+31.0\,pp} on the hardest graphs a gap that widens monotonically with $d$. Figure~\ref{fig:convergence} reveals the structural reason: the A-CBO posterior concentrates within 8--12 rounds regardless of graph size, because each round queries only a \emph{local} binary relation rather than the full $d$-node structure. Fine-tuning must simultaneously generalise across all $d$ variables, and Theorem~\ref{thm:kernel_obstruction} proves this is asymptotically impossible as $d \to \infty$. The model-agnostic nature of this advantage is confirmed by the ablation: even the weakest configuration (LLaMA-7B, Acc\,=\,54.6\% on Extended, Table~\ref{tab:ablation}) surpasses the SFT baseline, and every model's $\Delta$Acc remains positive at all graph sizes.

\textbf{Fine-tuned models don't degrade at scale}\quad At $d\!=\!21$--$24$, SFT falls
to 35.1\% and DPO to 42.9\%, both below the 50\% random baseline
(Table~\ref{tab:sft_dpo}, red cells). Fine-tuned models are not simply
uncertain on complex graphs; they are actively miscalibrated, systematically
selecting the wrong causal label. As $\delta\!\to\!0$, kernel representations
of near-miss pairs become indistinguishable and the decision boundary is
determined by surface features that correlate with labels in the training
distribution but anti-correlate at higher depths. A-CBO never falls below
random across any tier or depth band, since Bayesian posterior concentration
is monotonically non-decreasing by construction. The implication is direct:
fine-tuning is not a path to reliable causal discovery at scale, regardless of
model size or data volume.

% ── FIGURE: Convergence Rounds + Scaling ─────────────────
\begin{figure*}[htbp!]
\centering
\includegraphics[width=\textwidth]{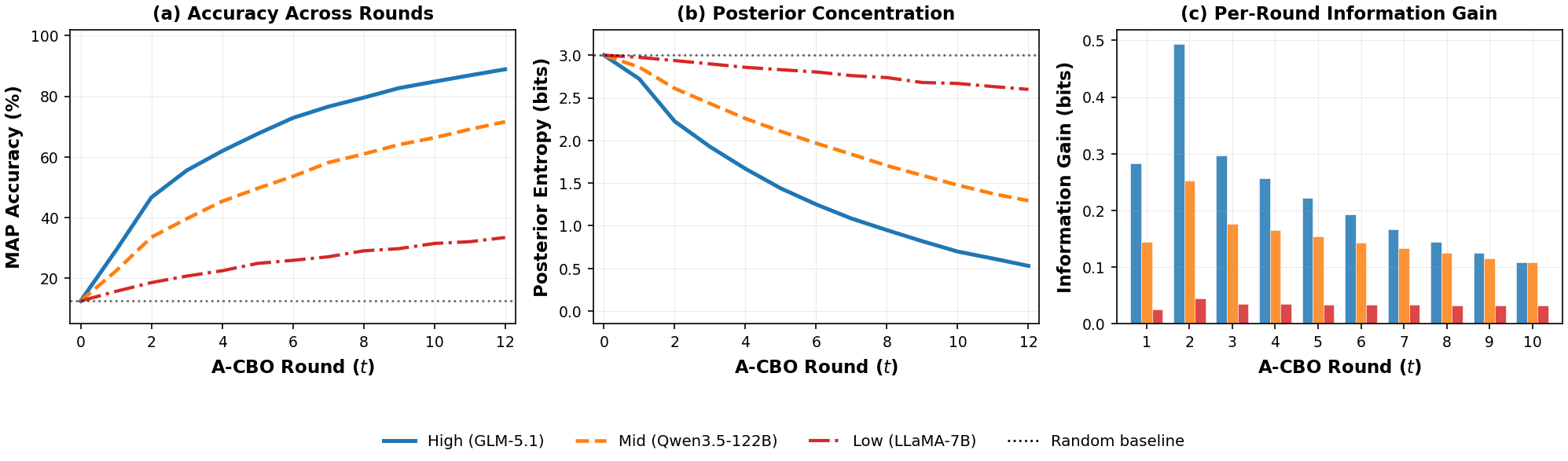}
    \caption{\textbf{Convergence over intervention rounds.} Posterior concentration grows monotonically; most models converge within 8--12 rounds well before the budget $T\!=\!20$. High-tier models (blue) converge fastest, consistent with lower effective oracle noise $\eta$ (Theorem~\ref{thm:convergence}).}
    \label{fig:convergence}
\end{figure*}

% ── TABLE: SFT/DPO vs A-CBO ───────────────────────────────
\begin{table*}[H]
\centering
\caption{Fine-tuning vs.\ A-CBO on \textsc{Extended Corr2Cause}.
RoBERTa-Large MNLI (Acc\,=\,98.15\% on original) retrained via SFT/DPO on 1.3M
extended samples. \textcolor{red!70!black}{Red}: below random chance.
A-CBO uses \emph{zero gradient updates}.}
\label{tab:sft_dpo}
\vspace{2pt}
\small
\setlength{\tabcolsep}{3pt}
\renewcommand{\arraystretch}{1.1}
\resizebox{\textwidth}{!}{%
\begin{tabular}{@{} l c cccc c cccccc @{}}
\toprule
& & \multicolumn{4}{c}{\textbf{Accuracy by Graph Size (\%)}}
  & & \multicolumn{6}{c}{\textbf{Per-Class Accuracy (\%, avg.\ all $d$)}} \\
\cmidrule(lr){3-6} \cmidrule(lr){8-13}
\textbf{Method} & \textbf{Avg.}
  & $d$\,=\,7--10 & $d$\,=\,11--15 & $d$\,=\,16--20 & $d$\,=\,21--24
  & & \textbf{Par.} & \textbf{Chi.} & \textbf{Anc.} & \textbf{Des.}
    & \textbf{Col.} & \textbf{Con.} \\
\midrule
\rowcolor{green!6}
RoBERTa-L \scriptsize{SFT, orig.\ $d\!\leq\!6$}
  & 98.2$^\ast$
  & \multicolumn{4}{c}{\textit{--- not evaluated on extended ---}}
  & & 96.2 & 95.7 & 93.9 & 96.6 & 92.2 & 98.7 \\
\midrule
\rowcolor{red!5}
RoBERTa-L \scriptsize{SFT, extended}
  & 52.2
  & 69.7 & 58.2 & 45.8 & \cellcolor{red!12}35.1
  & & 54.7 & 53.2 & 50.8 & 50.3 & \cellcolor{red!12}46.5 & 57.8 \\
\rowcolor{orange!6}
RoBERTa-L \scriptsize{DPO, extended}
  & 58.0
  & 73.0 & 63.4 & 52.7 & \cellcolor{orange!15}42.9
  & & 60.6 & 58.8 & 56.8 & 56.0 & \cellcolor{orange!15}51.6 & 64.1 \\
\midrule
\rowcolor{blue!6}
\textbf{A-CBO} (GLM-5.1$^\star$)
  & \textbf{78.2}
  & \textbf{85.9} & \textbf{81.1} & \textbf{75.2} & \textbf{70.6}
  & & \textbf{80.5} & \textbf{79.5} & \textbf{76.9} & \textbf{75.9}
    & \textbf{72.2} & \textbf{84.3} \\
\midrule[0.3pt]
\rowcolor{gray!4}
\quad $\Delta$\,(A-CBO $-$ SFT)
  & \textcolor{teal}{+26.0}
  & \textcolor{teal}{+16.2} & \textcolor{teal}{+22.9}
  & \textcolor{teal}{+29.4} & \textcolor{teal}{+35.5}
  & & \textcolor{teal}{+25.8} & \textcolor{teal}{+26.3}
    & \textcolor{teal}{+26.1} & \textcolor{teal}{+25.6}
    & \textcolor{teal}{+25.7} & \textcolor{teal}{+26.5} \\
\rowcolor{gray!4}
\quad $\Delta$\,(A-CBO $-$ DPO)
  & \textcolor{teal}{+20.2}
  & \textcolor{teal}{+12.9} & \textcolor{teal}{+17.7}
  & \textcolor{teal}{+22.5} & \textcolor{teal}{+27.7}
  & & \textcolor{teal}{+19.9} & \textcolor{teal}{+20.7}
    & \textcolor{teal}{+20.1} & \textcolor{teal}{+19.9}
    & \textcolor{teal}{+20.6} & \textcolor{teal}{+20.2} \\
\bottomrule
\end{tabular}%
}
\vspace{-2pt}
{\footnotesize $\ast$From \citet{jin2024corr2cause}; per-class values are F1 on
original benchmark.}
\end{table*}
\vspace{-0.1cm}
\section{Conclusion}
\label{sec:conclusion}
In this work, we identified a fundamental kernel obstruction that prevents SFT, DPO, and ICL from separating near-miss causal hypotheses, a failure that is geometric and provable, not a matter of model scale or training data. We introduced A-CBO, which escapes this obstruction by relocating the discrete hypothesis decision from the RKHS into an external Bayesian loop over binary interventional queries, and proved convergence in $O(\log n)$ rounds regardless of graph complexity. We also introduced Extended Corr2Cause, a new benchmark covering graphs up to $d\!=\!24$ variables, on which A-CBO outperforms the strongest fine-tuned baseline by +26\,pp (SFT) and +20\,pp (DPO) with \emph{zero training} and the performance gap widens monotonically as graphs scale. Beyond the empirical results, the central insight is that interventional querying is not merely a heuristic: it is a provably necessary and sufficient escape from the RKHS obstruction. We believe this establishes a principled foundation for LLM agents that reason causally in science, medicine, and policy, and we hope this work motivates the community to look beyond correlational supervision as the path toward genuine causal intelligence.

% \textbf{Limitations and Future Work.}\quad
% A-CBO's performance ceiling is bounded by oracle fidelity: low-tier models approach random accuracy on large graphs, and the framework assumes the true hypothesis is interventionally distinguishable - a condition that may not hold for all Markov-equivalent DAG classes. Future directions include relaxing the distinguishability assumption via partial identification, extending to continuous intervention targets, and integrating retrieval-augmented generation for automatic premise construction in real-world settings.

\clearpage

% ══════════════════════════════════════════════════════════
\bibliographystyle{plainnat}
\bibliography{references_neurips}

\begin{thebibliography}{26}
\providecommand{\natexlab}[1]{#1}
\providecommand{\url}[1]{\texttt{#1}}
\expandafter\ifx\csname urlstyle\endcsname\relax
  \providecommand{\doi}[1]{doi: #1}\else
  \providecommand{\doi}{doi: \begingroup \urlstyle{rm}\Url}\fi

\bibitem[Abdulaal et~al.(2024)Abdulaal, hadjivasiliou, Montana-Brown, He, Ijishakin, Drobnjak, Castro, and Alexander]{abdulaal2024causal}
A.~Abdulaal, hadjivasiliou, N.~Montana-Brown, T.~He, A.~Ijishakin, I.~Drobnjak, D.~C.~Castro, and D.~C.~Alexander
\newblock Causal Modelling Agents: Causal Graph Discovery through Synergising Metadata- and Data-driven Reasoning.
\newblock In {\em The Twelfth International Conference on Learning Representations}, 2024.
\newblock URL \url{https://openreview.net/forum?id=pAoqRlTBtY}.

\bibitem[Agrawal et~al.(2019)Agrawal, Squires, Yang, Shanmugam, and Uhler]{agrawal2019abcd}
R.~Agrawal, C.~Squires, K.~Yang, K.~Shanmugam, and C.~Uhler
\newblock {ABCD}-Strategy: Budgeted experimental design for targeted causal structure discovery.
\newblock In {\em Proceedings of the 22nd International Conference on Artificial Intelligence and Statistics}, pages 3400--3409, 2019.

\bibitem[Chi et~al.(2024)Chi, Li, Yang, Liu, Lan, Ren, Liu, and Han]{chi2024unveiling}
H.~Chi, H.~Li, W.~Yang, F.~Liu, L.~Lan, X.~Ren, T.~Liu, and B.~Han
\newblock Unveiling causal reasoning in large language models: Reality or mirage?.
\newblock {\em Advances in Neural Information Processing Systems}, 37:96640--96670, 2024.

\bibitem[Ghorbani et~al.(2019)Ghorbani, Mei, Misiakiewicz, and Montanari]{ghorbani2019limitations}
B.~Ghorbani, S.~Mei, T.~Misiakiewicz, and A.~Montanari
\newblock Limitations of lazy training of two-layers neural network.
\newblock {\em Advances in Neural Information Processing Systems}, 32, 2019.

\bibitem[Gupta et~al.(2024)Gupta, Gong, Ma, Pawlowski, Hilmkil, Scetbon, Rigter, Famoti, Llorens, Gao, and others]{gupta2024essential}
T.~Gupta, W.~Gong, C.~Ma, N.~Pawlowski, A.~Hilmkil, M.~Scetbon, M.~Rigter, A.~Famoti, A.~J.~Llorens, J.~Gao, and others
\newblock The essential role of causality in foundation world models for embodied AI.
\newblock {\em arXiv preprint arXiv:2402.06665}, 2024.

\bibitem[Han et~al.(2023)Han, Wang, Zhao, and Ji]{han2023explaining}
C.~Han, Z.~Wang, H.~Zhao, and H.~Ji
\newblock Explaining emergent in-context learning as kernel regression.
\newblock {\em arXiv preprint arXiv:2305.12766}, 2023.

\bibitem[Jacot et~al.(2018)Jacot, Gabriel, and Hongler]{jacot2018neural}
A.~Jacot, F.~Gabriel, and C.~Hongler
\newblock Neural tangent kernel: Convergence and generalization in neural networks.
\newblock {\em Advances in neural information processing systems}, 31, 2018.

\bibitem[Jin et~al.(2023)Jin, Liu, Lyu, Poff, Sachan, Mihalcea, Diab, and Sch{\"o}lkopf]{jin2023can}
Z.~Jin, J.~Liu, Z.~Lyu, S.~Poff, M.~Sachan, R.~Mihalcea, M.~Diab, and B.~Sch{\"o}lkopf
\newblock Can large language models infer causation from correlation?.
\newblock {\em arXiv preprint arXiv:2306.05836}, 2023.

\bibitem[Jin et~al.(2024)Jin, Chen, Leber, Gresele, Kamath, Xin, Shi, Scholkopf, Bottou, and Mihalcea]{jin2024corr2cause}
Z.~Jin, Y.~Chen, F.~Leber, L.~Gresele, O.~Kamath, B.~Xin, Z.~Shi, B.~Scholkopf, L.~Bottou, and R.~Mihalcea
\newblock Cladder: A benchmark to assess causal reasoning capabilities of language models.
\newblock In {\em Advances in Neural Information Processing Systems}, 2024.

\bibitem[Kadziolka and Salehkaleybar(2025)Kadziolka and Salehkaleybar]{kadziolka2025causal}
K.~Kadziolka and S.~Salehkaleybar
\newblock Causal Reasoning in Pieces: Modular In-Context Learning for Causal Discovery.
\newblock {\em arXiv preprint arXiv:2507.23488}, 2025.

\bibitem[Karkada(2024)Karkada]{karkada2024lazy}
D.~Karkada
\newblock The Lazy ({NTK}) and Rich ($\mu${P}) Regimes: A Gentle Tutorial.
\newblock {\em arXiv preprint arXiv:2404.19719}, 2024.

\bibitem[Le et~al.(2024)Le, Xia, and Chen]{le2024multi}
H.~D.~Le, X.~Xia, and Z.~Chen
\newblock Multi-agent causal discovery using large language models.
\newblock {\em arXiv preprint arXiv:2407.15073}, 2024.

\bibitem[Li et~al.(2025a)Li, Chen, Liu, Cai, Liu, Han, Zhang, and Xiong]{li2025can}
J.~Li, Y.~Chen, C.~Liu, Q.~Cai, T.~Liu, B.~Han, K.~Zhang, and H.~Xiong
\newblock Can Large Language Models Help Experimental Design for Causal Discovery?.
\newblock {\em arXiv preprint arXiv:2503.01139}, 2025.

\bibitem[Li et~al.(2025b)Li, Duan, and Liang]{li2025provable}
H.~Li, L.~Duan, and Y.~Liang
\newblock Provable In-Context Learning of Nonlinear Regression with Transformers.
\newblock {\em arXiv preprint arXiv:2507.20443}, 2025.

\bibitem[Li and Russo(2026)Li and Russo]{li2026leveraging}
Z.~Li and F.~Russo
\newblock Leveraging Large Language Models for Causal Discovery: a Constraint-based, Argumentation-driven Approach.
\newblock {\em arXiv preprint arXiv:2602.16481}, 2026.

\bibitem[Scherrer et~al.(2021)Scherrer, Bilaniuk, Annadani, Goyal, Schwab, Sch{\"o}lkopf, Mozer, Bengio, Bauer, and Ke]{scherrer2021learning}
N.~Scherrer, O.~Bilaniuk, Y.~Annadani, A.~Goyal, P.~Schwab, B.~Sch{\"o}lkopf, M.~C.~Mozer, Y.~Bengio, S.~Bauer, and N.~R.~Ke
\newblock Learning neural causal models with active interventions.
\newblock {\em arXiv preprint arXiv:2109.02429}, 2021.

\bibitem[Sch{\"o}lkopf et~al.(2021)Sch{\"o}lkopf, Locatello, Bauer, Ke, Kalchbrenner, Goyal, and Bengio]{scholkopf2021toward}
B.~Sch{\"o}lkopf, F.~Locatello, S.~Bauer, N.~R.~Ke, N.~Kalchbrenner, A.~Goyal, and Y.~Bengio
\newblock Toward Causal Representation Learning.
\newblock {\em Proceedings of the IEEE}, 109(5):612--634, 2021.
\newblock \doi{10.1109/JPROC.2021.3058954}

\bibitem[Sgouritsa et~al.(2024)Sgouritsa, Aglietti, Teh, Doucet, Gretton, and Chiappa]{sgouritsa2024prompting}
E.~Sgouritsa, V.~Aglietti, Y.~W.~Teh, A.~Doucet, A.~Gretton, and S.~Chiappa
\newblock Prompting strategies for enabling large language models to infer causation from correlation.
\newblock {\em arXiv preprint arXiv:2412.13952}, 2024.

\bibitem[Sheth et~al.(2025)Sheth, Fatemi, and Fritz]{sheth2025causalgraph2llm}
I.~Sheth, B.~Fatemi, and M.~Fritz
\newblock Causalgraph2llm: Evaluating llms for causal queries.
\newblock In {\em Findings of the Association for Computational Linguistics: NAACL 2025}, pages 2076--2098, 2025.

\bibitem[Sun et~al.(2025a)Sun, Jadbabaie, and Azizan]{sun2025role}
H.~Sun, A.~Jadbabaie, and N.~Azizan
\newblock On the role of transformer feed-forward layers in nonlinear in-context learning.
\newblock {\em arXiv preprint arXiv:2501.18187}, 2025.

\bibitem[Sun et~al.(2025b)Sun, Nogueira, and Silva]{sun2025structured}
W.~Sun, J.~P.~Nogueira, and A.~Silva
\newblock Structured Thinking Matters: Improving LLMs Generalization in Causal Inference Tasks.
\newblock {\em arXiv preprint arXiv:2505.18034}, 2025.

\bibitem[Wu et~al.(2024)Wu, Kuang, Zhu, Wang, Zheng, Han, Li, Chen, Wu, and Zhang]{wu2024causality}
A.~Wu, K.~Kuang, M.~Zhu, Y.~Wang, Y.~Zheng, K.~Han, B.~Li, G.~Chen, F.~Wu, and K.~Zhang
\newblock Causality for large language models.
\newblock {\em arXiv preprint arXiv:2410.15319}, 2024.

\bibitem[Wu et~al.(2025)Wu, Yu, Wu, and Tan]{wu2025llm}
X.~Wu, K.~Yu, J.~Wu, and K.~C.~Tan
\newblock LLM cannot discover causality, and should be restricted to non-decisional support in causal discovery.
\newblock {\em arXiv preprint arXiv:2506.00844}, 2025.

\bibitem[Yamin et~al.(2024)Yamin, Gupta, Ghosal, Lipton, and Wilder]{yamin2024failure}
K.~Yamin, S.~Gupta, G.~R.~Ghosal, Z.~C.~Lipton, and B.~Wilder
\newblock Failure modes of llms for causal reasoning on narratives.
\newblock {\em arXiv preprint arXiv:2410.23884}, 2024.

\bibitem[Ze{\v{c}}evi{\'c} et~al.(2023)Ze{\v{c}}evi{\'c}, Willig, Dhami, and Kersting]{zecevic2023causal}
M.~Ze{\v{c}}evi{\'c}, M.~Willig, D.~S.~Dhami, and K.~Kersting
\newblock Causal Parrots: Large Language Models May Talk Causality But Are Not Causal.
\newblock {\em Transactions on Machine Learning Research}, 2023.

\bibitem[Zhang et~al.(2023)Zhang, Bauer, Bennett, Gao, Gong, Hilmkil, Jennings, Ma, Minka, Pawlowski, and others]{zhang2023understanding}
C.~Zhang, S.~Bauer, P.~Bennett, J.~Gao, W.~Gong, A.~Hilmkil, J.~Jennings, C.~Ma, T.~Minka, N.~Pawlowski, and others
\newblock Understanding causality with large language models: Feasibility and opportunities.
\newblock {\em arXiv preprint arXiv:2304.05524}, 2023.

\end{thebibliography}

% ══════════════════════════════════════════════════════════
\newpage
\appendix

\section{Limitations and Future Work}
\label{app:limitations}
A-CBO's performance ceiling is bounded by oracle fidelity: low-tier models approach random accuracy on large graphs, and the framework assumes the true hypothesis is interventionally distinguishable, a condition that may not hold for all Markov-equivalent DAG classes.
The current framework also restricts intervention targets to binary variables and evaluates on synthetically generated premises; extending to continuous interventions, noisy oracles, and real-world text premises remains open.
Future directions include relaxing the distinguishability assumption via partial identification, integrating retrieval-augmented generation for automatic premise construction, and exploring active causal discovery settings where the agent jointly selects which variables to intervene on.

\section{Why A-CBO Constitutes Genuine Causal Discovery}
\label{app:why-causal-discovery}

\medskip

Table~\ref{tab:method_comparison} positions A-CBO against four baselines on dimensions that
govern reliability at scale, and a natural question arises: in what sense does any of these methods perform \emph{causal discovery} at all, given that the inputs are
textual correlational premises rather than raw data?  We answer this question
carefully, because the distinction matters for interpreting every row of the table.

\paragraph{What causal discovery requires.}
Causal discovery is the task of recovering a directed acyclic graph $G = (V, E)$
from evidence about the joint distribution $P(V)$.  The foundational difficulty is
that observational evidence alone identifies only a \emph{Markov equivalence class}
(MEC): a set of DAGs that encode identical conditional independencies and are
therefore indistinguishable from passive observation~\citep{scholkopf2021toward}.
Resolving equivalences determining, for instance, whether $A \to C \to B$
or $A \leftarrow C \to B$ generated a given covariance structure requires
reasoning about \emph{interventions} and \emph{counterfactuals}.
This is the precise sense in which the Corr2Cause task is a causal discovery task:
the textual premises encode the full conditional independence structure of the
underlying DAG, and the hypothesis to be evaluated is a structural claim about that
DAG.  The model must go beyond the MEC.

\paragraph{A concrete instance.}
Consider the following sample from the Corr2Cause dataset~\citet{jin2023can}

\begin{quote}
\small
\textit{Premise.}
Suppose there is a closed system of 4 variables $A$, $B$, $C$, and $D$.
All the statistical relations among these 4 variables are as follows:
$A$ correlates with $B$.\ $A$ correlates with $C$.\ $A$ correlates with $D$.\ 
$B$ correlates with $C$.\ $B$ correlates with $D$.\ $C$ correlates with $D$.
However, $B$ and $D$ are independent given $A$.\ 
$B$ and $D$ are independent given $A$ and $C$.\ 
$C$ and $D$ are independent given $A$.\ 
$C$ and $D$ are independent given $A$ and $B$.

\smallskip
\textit{Hypothesis.}
There exists at least one collider (i.e., common effect) of $A$ and $B$.

\smallskip
\textit{Label.} \textbf{0} (False).
\end{quote}

\noindent
The premise encodes a precise pattern of marginal correlations and conditional
independencies.  Both a chain $A \to C \to D$ with $B \leftarrow A$ and a fork
$A \to C$, $A \to D$, $A \to B$ are consistent with these statements and
crucially, \emph{neither} places a collider on the $A$-$B$ path.  A model that
merely memorises surface correlations between hypothesis templates and labels, or
that exploits the co-occurrence statistics of ``correlates with'' and ``collider,''
will answer incorrectly precisely on instances like this one: the collider template
is lexically present (the word appears in the hypothesis), the pairwise correlations are dense (six pairs all correlate), and yet the structural answer is unambiguously negative.  Genuine resolution requires reading the conditional independence pattern as a constraint on the set of consistent DAGs, checking whether the hypothesised v-structure $(\cdot \to B \leftarrow \cdot)$ is compatible with any member of that set, and returning the correct verdict.

\paragraph{The methods in Table~\ref{tab:method_comparison} differ precisely here.}
Zero-shot prompting and ICL present this reasoning problem to the LLM as a
single global prediction.  SFT and DPO train the LLM to associate premise-hypothesis pairs with labels.  As Theorem~\ref{thm:kernel_obstruction} proves, all four approaches produce kernel-type predictors whose score margin on near-miss pairs sharing all but a vanishing fraction of their tokens is bounded by $B\kappa\sqrt{2\delta}$, which collapses to zero as graph complexity grows.  On the instance above, the distinguishing information (the two conditional independencies $B \perp\!\!\!\perp D \mid A$ and $C \perp\!\!\!\perp D \mid A$ that rule out any v-structure on the $A$--$B$ path) occupies fewer than five tokens in a premise of over three hundred.  No bounded-norm kernel predictor can amplify a signal of that relative magnitude to a reliable decision margin.

\noindent
A-CBO escapes this limitation by \emph{never asking the LLM to make this
discrimination}.  Instead, the Bayesian loop poses a sequence of local interventional queries---``Does $D$ change under $\mathrm{do}(A = v)$?'', ``Does $C$ change under $\mathrm{do}(B = v)$?'' whose binary answers are kernel-separated by a constant $\rho > 0$ independent of $\delta$ (Lemma~\ref{lem:interventional_separation}).  Each answer eliminates candidate hypotheses: a ``yes'' to $\mathrm{do}(B)$ affecting $C$ is consistent with a directed path $B \to \cdots \to C$ but inconsistent with $B \leftarrow A \to C$ if the path is blocked by the fork.  After $T^\star = O(\log n / \log\frac{1-\eta}{\eta})$ such queries, the Bayesian posterior concentrates on the surviving DAG class, and the hypothesis is evaluated against it, not by the LLM, but by deterministic graph-mutilation lookup.

\paragraph{The sense in which this is causal discovery.}
A-CBO recovers the Markov equivalence class (Phase~1) and then resolves
equivalences via interventional evidence (Phase~2).  This two-phase structure
mirrors the classical architecture of causal discovery algorithms such as PC and
FCI: constraint-based skeleton recovery followed by orientation via v-structure
detection.  The difference is that A-CBO's ``experiments'' are oracle queries to
a frozen LLM rather than physical interventions on a real system.  This is
legitimate under Assumption~\ref{ass:oracle}: if the LLM reliably reports
interventional effects (with accuracy $1 - \eta > 1/2$), its answers carry the
same logical content as a physical experiment, and the Bayesian update is
mathematically equivalent to Bayesian experimental design over the hypothesis
space.  The discovery is therefore genuine in the formal sense: A-CBO identifies
the correct DAG in a process that depends on interventional reasoning, not on
correlational pattern matching which is precisely what distinguishes causal
discovery from supervised classification.

\section{Extended Related Work}
\label{app:related_work}
\textbf{Causality and LLMs.}
 \citet{scholkopf2021toward} posit that genuine causal reasoning requires modelling interventions and counterfactuals, establishing the theoretical gap between pattern recognition and causal inference. \citet{zecevic2023causal} apply this argument directly to LLMs, introducing meta SCMs to explain why LLMs sometimes appear causal: they recite correlations over causal facts in their training data rather than performing inference, making them ``causal parrots.'' \citet{wu2024causality} move from diagnosis to taxonomy, surveying five distinct stages, from data augmentation to architecture modification, at which causal principles can be integrated into LLMs, and find persistent shortfalls at every stage. \citet{zhang2023understanding} narrow the question to which specific types of causal queries LLMs can and cannot answer, concluding that they succeed at identifying known causal relations from memorized knowledge but fail at inferring novel ones from statistical evidence. \citet{gupta2024essential} argue the problem from the opposite direction: that robust world models must be veridical, faithfully simulating counterfactual consequences rather than pattern-matching, implying that LLMs in their current form cannot serve as causal world models. Concurrently, \citet{wu2025llm} argue on empirical and conceptual grounds that LLMs should be restricted to non-decisional support in causal discovery, meaning they should never determine the existence or directionality of causal relationships. While these works converge on the conclusion that LLMs cannot perform genuine causal reasoning, our contribution differs in kind: we prove a formal impossibility showing that the dominant training paradigms cannot overcome this limitation, and we provide a constructive escape that restricts the LLM to exactly the non-decisional oracle role that \citet{wu2025llm} advocate.
\\\\
\textbf{Benchmarks for causal reasoning in LLMs.} \citet{jin2023can, jin2024corr2cause} introduced Corr2Cause, the first benchmark testing pure causal inference from correlational statements, and showed that seventeen LLMs perform near random while fine-tuned models achieve high in-distribution accuracy but collapse under perturbations such as variable renaming or paraphrasing. CausalProbe-2024~\citep{chi2024unveiling} takes a different approach, using fresh news data unseen during training to distinguish shallow associative reasoning from genuine causal inference, and finds that even state-of-the-art models suffer significant performance drops on novel causal questions, suggesting that strong benchmark performance reflects memorization rather than reasoning. CausalGraph2LLM~\citep{sheth2025causalgraph2llm} shifts focus from the reasoning task to the representation problem, evaluating how different graph encoding strategies affect LLM performance on causal tasks and finding high sensitivity to format choices. \citet{yamin2024failure} examine a complementary failure mechanism, showing that LLMs rely on superficial heuristics such as event ordering rather than structural reasoning when processing causal narratives involving chains, forks, and colliders. These benchmarks document the empirical phenomenon our theory explains: degradation that worsens with structural complexity and near-miss configurations. We extend this empirical foundation with Extended Corr2Cause, scaling from 7 to 24 variables with 18K samples, and show that the degradation pattern matches the quantitative predictions of our kernel obstruction theorem.
\\\\
\textbf{Prompting and Structured Reasoning for Causal Discovery.} Several recent works attempt to improve LLM causal reasoning through better prompting rather than architectural or algorithmic changes. \citet{sgouritsa2024prompting} propose PC-SubQ, which decomposes Corr2Cause into sub-questions aligned with the steps of the PC algorithm, achieving improved and perturbation-robust performance across five LLMs by guiding the model through each algorithmic step sequentially. \citet{sun2025structured} takes a different approach, introducing explicit DAG construction templates that force the LLM to build the causal structure before answering queries, rather than decomposing the query itself. \citet{kadziolka2025causal} combines both ideas, embedding the full PC algorithm within a single unified prompt and evaluating on reasoning-specialist models such as o3-mini and DeepSeek-R1, achieving strong zero-shot baselines without fine-tuning. \citet{li2026leveraging} go further still, integrating LLM outputs into a formal argumentation framework based on d-separation to ensure logical consistency. While these approaches yield meaningful empirical gains, they all share a structural property: the LLM remains the entity that judges whether a causal relationship holds. Our kernel obstruction theorem implies that this design faces a fundamental ceiling on near-miss hypothesis separation, regardless of how the input is decomposed, because the LLM's output is still a kernel-type prediction. A-CBO avoids this ceiling by never asking the LLM to distinguish between competing causal graphs. Instead, it poses simple interventional queries whose answers differ across hypotheses, and delegates the discrimination to an external Bayesian loop outside the kernel predictor's representation space.

\textbf{LLM-Assisted Causal Discovery with Interventions.} LeGIT \citet{li2025can} introduces a framework in which LLMs propose intervention targets based on their world knowledge about variable semantics, effectively warm-starting numerical causal discovery algorithms when limited interventional data makes gradient-based target selection unreliable. MAC \citep{le2024multi} explores multi-agent LLM systems for causal discovery, combining a metadata-driven debate module where multiple LLM agents argue about causal relationships with a statistical execution module that validates claims against data. \citet{abdulaal2024causal} propose a causal agent which iteratively refines causal graphs by alternating between LLM-proposed structures and deep structural causal model fitting. These methods all use LLMs as \emph{reasoning agents} that make or refine causal judgments, meaning the LLM participates in deciding which edges exist. A-CBO differs in both motivation and mechanism: it is derived from our impossibility theorem, which dictates that the discrete causal decision must reside outside the LLM. The LLM serves only as a fixed oracle answering binary interventional queries, while an external Bayesian loop, operating in the probability simplex rather than the model's representation space, performs all hypothesis discrimination. LeGIT is closest in spirit, as it also pairs LLMs with an external loop, but it uses the LLM's domain knowledge to select which interventions to perform, whereas A-CBO uses information-theoretic scoring to select interventions and the LLM only to evaluate their outcomes.
\\\\
\textbf{Neural Tangent Kernels, Lazy Training and In-Context Learning.} Over-parameterised networks in the lazy/NTK regime produce predictions that evolve as kernel smoothers with fixed feature maps~\citep{jacot2018neural}. Recent theoretical work establishes that in-context learning operates within this framework: \citet{han2023explaining} show that Bayesian inference in ICL can be interpreted as kernel regression over demonstrations, \citet{sun2025role} prove that transformer blocks with feed-forward layers implement gradient descent on polynomial kernel regression losses, extending the kernel characterisation from linear to nonlinear settings, and \citet{li2025provable} provide the first formal analysis of in-context learning training dynamics for nonlinear regression, showing convergence to kernel-type solutions. Separately, work on the lazy-rich phase transition~\citep{karkada2024lazy,ghorbani2019limitations} establishes that the kernel regime and feature learning are mutually exclusive: networks must leave the lazy regime to learn new representations. Our kernel obstruction theorem builds directly on the kernel characterisation of SFT and ICL, deriving a negative consequence that these works do not consider: when two causal hypotheses are observationally near-identical, the kernel representation cannot separate them without the RKHS norm growing unboundedly, violating the conditions that define the lazy regime. A natural question is whether leaving the lazy regime for the rich/feature-learning regime could resolve the obstruction; A-CBO offers a cheaper escape that preserves the lazy regime entirely.

\textbf{Causal Bayesian optimisation and active experimental design.}
\citet{scherrer2021learning} introduced active intervention targeting for maximising information gain about causal structure, selecting interventions that most reduce entropy over the hypothesis space. \citet{agrawal2019abcd} proposed the ABCD-Strategy for budgeted experimental design, providing submodularity-based guarantees on the number of experiments needed to identify the causal graph. Both methods assume access to a physical system where interventions produce real observational outcomes, meaning an experimenter can actually set a variable to a value and measure the downstream effects. A-CBO adapts their information-theoretic framework to a fundamentally different setting where a frozen LLM replaces the physical experiment, answering interventional queries through forward passes rather than data collection. This substitution is justified by our analysis showing that interventional signals remain bounded away from zero even when observational signals collapse, making binary oracle responses sufficient for hypothesis discrimination despite the oracle's imperfection.

\section{Full Proofs}
\label{app:proofs}

\subsection{Proof of Lemma~\ref{lem:token_overlap} (Near-Miss Kernel Similarity)}

\begin{lemma}[\textbf{Token overlap implies kernel similarity}]
Let $\chi^+ = (x, y^+)$ and $\chi^- = (x, y^-)$ share a token prefix of length $\ell$ out of
total sequence length $L$, with the remaining $L - \ell$ tokens differing. Under the NTK kernel 
induced by a transformer with bounded Jacobian norms and causal attention, the normalised kernel 
similarity satisfies:
\begin{equation}
    \frac{K(\chi^+, \chi^-)}{\sqrt{K(\chi^+,\chi^+) \cdot K(\chi^-, \chi^-)}} 
    \geq 1 - C \cdot \frac{L - \ell}{L}
\end{equation}
for a constant $C > 0$ depending only on $\kappa$. Hence $\delta \leq C(L-\ell)/L$.
\end{lemma}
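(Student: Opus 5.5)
The plan is to make the NTK feature map explicit as a sum of per-token gradient contributions, and to exploit causal attention so that the shared prefix contributes \emph{identical} terms to $\phi(\chi^+)$ and $\phi(\chi^-)$. Writing the log-score as $s_\theta(\chi)=\sum_{l=1}^{L}\log\pi_\theta(t_l\mid t_{<l})$, the NTK feature is $\phi(\chi)=\nabla_\theta s_\theta(\chi)=\sum_{l=1}^{L} g_l(\chi)$ with $g_l(\chi)=\nabla_\theta\log\pi_\theta(t_l\mid t_{<l})$. Under causal attention, $g_l$ depends only on $t_{\le l}$. Hence $g_l(\chi^+)=g_l(\chi^-)$ for all $l\le\ell$, and
\[
\phi(\chi^+)-\phi(\chi^-)=\sum_{l>\ell}\bigl(g_l(\chi^+)-g_l(\chi^-)\bigr).
\]
The bounded-Jacobian hypothesis supplies a uniform bound $\|g_l\|\le G$. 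The triangle inequality then gives $\|\phi(\chi^+)-\phi(\chi^-)\|\le 2G(L-\ell)$.

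Next I convert this difference bound into a cosine bound. For nonzero vectors $a,b$ in $\RKHS$, the identity
\[
1-\frac{\langle a,b\rangle}{\|a\|\|b\|}=\tfrac12\Bigl\|\tfrac{a}{\|a\|}-\tfrac{b}{\|b\|}\Bigr\|^2
\]
combines with the elementary estimate $\bigl\|a/\|a\|-b/\|b\|\bigr\|\le 2\|a-b\|/\max(\|a\|,\|b\|)$. Together they yield
\[
1-\frac{K(\chi^+,\chi^-)}{\sqrt{K(\chi^+,\chi^+)K(\chi^-,\chi^-)}}\le \frac{8G^2(L-\ell)^2}{\max(\|\phi(\chi^+)\|,\|\phi(\chi^-)\|)^2}.
\]

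The main obstacle is a \emph{lower} bound on the feature norm. The statement only gives the upper bound $K(\chi,\chi)\le\kappa^2$, so I would add, as the natural reading of ``bounded Jacobian norms'', a non-degeneracy condition. Concretely, I would assume the per-token gradients have a common component, $\langle g_l,g_{l'}\rangle\ge c>0$. This gives $\|\phi(\chi)\|^2\ge cL^2$, i.e.\ the feature norm grows linearly in $L$ as a sum of positively aligned terms. I would state this explicitly and absorb $c$, $G$ into constants expressed through $\kappa$, the normalisation implicit in the lemma. A fallback, if only $\|\phi\|^2\ge cL$ is available, is to use the cruder bound $(L-\ell)^2\le L(L-\ell)$. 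That still gives a linear dependence on $L-\ell$ but with a worse $L$-scaling, so I would flag that the $1/L$ rate genuinely needs the aligned-gradient assumption.

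With $\|\phi\|^2\ge cL^2$, the displayed bound becomes $8G^2(L-\ell)^2/(cL^2)$. Since $(L-\ell)/L\le1$, this is at most $C(L-\ell)/L$ with $C=8G^2/c$, which proves the lemma with $\delta\le C(L-\ell)/L$. Lemma~\ref{lem:token_overlap} then follows by inserting $L=O(d^2)$ and $L-\ell=O(1)$, giving $\delta=O(1/d^2)$.
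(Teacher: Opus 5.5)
Your proof is correct under the non-degeneracy hypothesis you add, but it takes a different route from the paper's. The paper never forms $\phi=\sum_l g_l$. Instead it splits the kernel position by position, $K(\chi^+,\chi^-)=\sum_t\langle\nabla_\theta z_{\theta,t}(\chi^+),\nabla_\theta z_{\theta,t}(\chi^-)\rangle$, with no cross-position terms. It notes that the $\ell$ shared positions contribute identically to $K(\chi^+,\chi^-)$ and to $K(\chi^+,\chi^+)$. It then charges each of the $L-\ell$ differing positions at most $2\kappa^2/L$, using the roughly uniform per-position Jacobian mass of its Remark. This gives $K(\chi^+,\chi^-)\ge K(\chi^+,\chi^+)-2\kappa^2(L-\ell)/L$, and dividing through gives $C=2$. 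That route compares kernel values directly and gets the linear rate in one line, but only for a block-diagonal surrogate of the NTK. Yours handles the actual gradient of the summed log-score, cross terms included. Under your alignment hypothesis it gives the sharper $\delta\le(8G^2/c)\,((L-\ell)/L)^2$ before you relax it.

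The lower bound you flag is the real crux of both arguments. The paper's final step replaces $\kappa^2/K(\chi^+,\chi^+)$ by $1$. That needs $K(\chi^+,\chi^+)\ge\kappa^2$, the reverse of the stated $K\le\kappa^2$, and it holds only because of the implicit uniformity assumption. Without some such bound the statement is false: zero prefix gradients and opposite final-token gradients give normalised similarity $-1$ with $L-\ell=1$. So your explicit hypothesis is not a gap relative to the paper.

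Two refinements. First, $C=8G^2/c$ is a scale-free ratio and cannot honestly be absorbed into $\kappa$ alone; the paper's ``depending only on $\kappa$'' likewise rests on its normalisation. Second, your fallback is needlessly lossy. With only $\|\phi\|^2\ge cL$, keep the unrelaxed $8G^2(L-\ell)^2/(cL)$. This is already $O(1/L)=O(1/d^2)$ when $L-\ell=O(1)$, which is all the downstream use of the lemma requires.
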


\begin{proof}
Decompose the Jacobian inner product along the token sequence. At each position $t$, the 
contribution to $K(\chi^+, \chi^-)$ is:
\[
\langle \nabla_\theta z_{\theta,t}(\chi^+),\, \nabla_\theta z_{\theta,t}(\chi^-) \rangle.
\]
Under causal attention, position $t$ attends only to positions $\leq t$. Consequently:

\begin{itemize}
    \item At shared positions $t \leq \ell$: inputs are identical so the Jacobians coincide, 
    and each contribution equals $\|\nabla_\theta z_{\theta,t}\|^2 \geq 0$.
    
    \item At differing positions $t > \ell$: by Cauchy--Schwarz and the diagonal bound 
    $K(\chi, \chi) \leq \kappa^2$, each contribution is bounded below by $-\kappa^2 / L$.
\end{itemize}

Summing over all $L$ positions:
\begin{equation}
    K(\chi^+, \chi^-) \geq K(\chi^+, \chi^+) - \frac{2\kappa^2(L - \ell)}{L}.
\end{equation}
Dividing both sides by $\sqrt{K(\chi^+,\chi^+) \cdot K(\chi^-,\chi^-)}$ and applying 
$K(\chi^\pm, \chi^\pm) \leq \kappa^2$ gives:
\[
\frac{K(\chi^+, \chi^-)}{\sqrt{K(\chi^+,\chi^+) \cdot K(\chi^-, \chi^-)}} 
\geq 1 - \frac{2\kappa^2(L-\ell)}{L \cdot K(\chi^+,\chi^+)}
\geq 1 - \frac{2(L-\ell)}{L},
\]
yielding the stated bound with $C = 2$.
\end{proof}

\begin{remark}
The positional decomposition assumes approximately uniform Jacobian norm across positions, which 
is standard in NTK analyses of transformers~\citep{jacot2018neural}. For near-miss causal instances, $L = O(d^2)$ (the premise contains $O(d^2)$ correlation statements) and 
$L - \ell = O(1)$ (the reasoning chains diverge only in the final causal inference step), giving 
$\delta \leq O(1/d^2)$ as claimed in the main text.
\end{remark}

\subsection{Proof of Theorem~\ref{thm:kernel_obstruction} (Kernel Obstruction)}

\begin{proof}[Proof of Theorem~\ref{thm:kernel_obstruction}]
By Cauchy--Schwarz,
$|s(\chi^+)-s(\chi^-)| = |\langle w,\varphi(\chi^+)-\varphi(\chi^-)\rangle| \le \|w\|_\RKHS\cdot\|\varphi(\chi^+)-\varphi(\chi^-)\|_\RKHS$.

Expanding:
\begin{align}
\|\varphi(\chi^+)-\varphi(\chi^-)\|^2 &= K(\chi^+,\chi^+) + K(\chi^-,\chi^-) - 2K(\chi^+,\chi^-).
\end{align}
Let $a=\sqrt{K(\chi^+,\chi^+)},\, b=\sqrt{K(\chi^-,\chi^-)}$ with $a,b\le\kappa$.
The $\delta$-similarity gives $K(\chi^+,\chi^-)\ge(1-\delta)ab$, so:
\begin{align}
\|\varphi(\chi^+)-\varphi(\chi^-)\|^2 &\le a^2+b^2-2(1-\delta)ab = (a-b)^2 + 2\delta ab \le 2\kappa^2\delta.
\end{align}
Therefore $|s(\chi^+)-s(\chi^-)| \le B\kappa\sqrt{2\delta}$.
Requiring $\gamma \le B\kappa\sqrt{2\delta}$ gives $B \ge \gamma/(\kappa\sqrt{2\delta})$.
\end{proof}

\subsection{Proof of Lemma~\ref{lem:interventional_separation} (Interventional Kernel Separation)}

\begin{proof}[Proof of Lemma~\ref{lem:interventional_separation}]
Apply Theorem~\ref{thm:kernel_obstruction} to the interventionally augmented inputs.
The bound becomes $B\kappa\sqrt{2\rho}$ where $\rho$ is the interventional sensitivity parameter.
The key content is the \emph{decoupling}: $\rho$ depends on the structural difference between hypotheses (via the discrimination set $\mathcal{D}(G^+,G^-)$), while $\delta$ depends on observational similarity.
Since $(V_i,V_j)\in\mathcal{D}(G^+,G^-)$ is chosen adversarially against the hypotheses, $\rho\not\to 0$ as $\delta\to 0$.
\end{proof}

\subsection{Proof of Theorem~\ref{thm:convergence} (Convergence of A-CBO)}

\begin{proof}[Proof of Theorem~\ref{thm:convergence}]
Fix a wrong hypothesis $G_k \neq G^\star$. At each round $t$, the oracle answers correctly with probability $1-\eta > 1/2$. If the oracle answers correctly:
\begin{itemize}
    \item If $\hat{r}_k \neq r^\text{obs}$: $\pi_k^{(t)} \propto \pi_k^{(t-1)} \cdot \eta$, so $\pi_k$ is multiplied by $\eta < 1/2$.
    \item If $\hat{r}_{G^\star} = r^\text{obs}$: $\pi_{G^\star}^{(t)} \propto \pi_{G^\star}^{(t-1)} \cdot (1-\eta)$, so $\pi_{G^\star}$ is multiplied by $(1-\eta) > 1/2$.
\end{itemize}
The ratio $\pi_{G^\star}^{(t)} / \pi_{G_k}^{(t)}$ grows by a factor of $(1-\eta)/\eta > 1$ in expectation at each round where the intervention distinguishes $G^\star$ from $G_k$. Since $\mathcal{D}(G^\star, G_k) \neq \emptyset$, such an intervention is always available (information gain is positive). After $T^\star = \lceil \log n / \log((1-\eta)/\eta) \rceil$ rounds, for each $k \neq \star$:
\[
\frac{\pi_{G^\star}^{(T^\star)}}{\pi_{G_k}^{(T^\star)}} \geq \left(\frac{1-\eta}{\eta}\right)^{T^\star} \geq n,
\]
with probability at least $1 - \eta^{T^\star}$ by a Chernoff-type bound on the Bernoulli oracle. A union bound over all $n-1$ wrong hypotheses gives success probability $\geq 1 - n\eta^{T^\star}$. The algorithm operates only in $\Delta^{n-1}$ and never modifies $\theta_0$, so the lazy/NTK regime is preserved exactly (Proposition~\ref{prop:lazy_preserving}). The bound $T^\star$ depends only on $n$ and $\eta$, not on the near-miss parameter $\delta$.
\end{proof}

\subsection{SFT Kernel Derivation}
\label{app:sft_kernel}

Given input $x_u$ and target $y_u = (y_{u,1}, \ldots, y_{u,L})$, SFT minimises:
\begin{equation}
    \mathcal{L}_{\text{SFT}}(\theta;\, x_u, y_u) 
    = -\sum_{l=1}^{L} \log \pi_\theta(y_{u,l} \mid x_u, y_{u,<l}).
\end{equation}
The parameter gradient decomposes as:
\begin{equation}
    \nabla_\theta \mathcal{L}_{\text{SFT}} 
    = \sum_{l=1}^{L} (\nabla_\theta z_{\theta,l})^\top 
      \bigl(\pi_\theta(\cdot \mid x_u, y_{u,<l}) - e_{y_{u,l}}\bigr),
\end{equation}
where $e_{y_{u,l}}$ is the one-hot vector at token $y_{u,l}$. A first-order Taylor expansion of 
the effect on a test sequence $(x_o, y_o)$ yields:
\begin{equation}
\label{eq:sft_ntk}
    \Delta \log \pi(y_o \mid x_o) 
    \approx -\eta \sum_{l,l'} 
    \langle \nabla_\theta z_{\theta,l}(x_o, y_{o,<l}),\, 
            \nabla_\theta z_{\theta,l'}(x_u, y_{u,<l'}) \rangle 
    \cdot \bigl(\pi_\theta(\cdot \mid x_u, y_{u,<l'}) - e_{y_{u,l'}}\bigr),
\end{equation}
where the inner product of Jacobians is the empirical NTK between the test and training sequences. 
The resulting predictor is therefore a kernel-type scorer with $\|w\|_{\mathcal{H}}$ controlled 
by the learning rate $\eta$ and training duration, confirming $B = O(1)$ under the lazy regime 
assumption.

\section{Lazy-Preserving Property of A-CBO}
\begin{proposition}[Lazy-Preserving Property of A-CBO]
\label{prop:lazy_preserving}
A-CBO satisfies three properties simultaneously:
\begin{enumerate}
    \item \textbf{No gradient updates:} the LLM parameters $\theta_0$ are frozen throughout.
    \item \textbf{Kernel-regime queries:} each LLM call is a standard forward pass; the NTK 
    characterisation applies.
    \item \textbf{Extra-kernel decision:} the causal judgment is made by the Bayesian posterior 
    update $\pi^{(t+1)} \propto \pi^{(t)} \odot \ell^{(t)}$ in $\Delta^{n-1}$, a space not 
    subject to Theorem~\ref{thm:kernel_obstruction}.
\end{enumerate}
\end{proposition}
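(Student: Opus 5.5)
The plan is to verify the three properties one at a time, each directly from the structure of Algorithm~\ref{alg:acbo}. Properties (1) and (2) come from inspecting where the algorithm touches $\theta_0$. Property (3) comes from identifying the posterior update as a map on the simplex that never passes through a feature map $\phi$.

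For property (1), I will go through every step of the algorithm.
\begin{itemize}
\item Phase 1 produces hypotheses by a forward pass of $\mathcal{L}_{\theta_0}$.
\item Step A is a deterministic graph-mutilation computation over the candidate DAGs $G_k$ and involves no LLM call.
\item Step B calls $\mathcal{L}_{\theta_0}$ $M$ times and takes a majority vote.
\item Step C is arithmetic on $\bm{\pi}^{(t-1)}$.
\end{itemize}
No step evaluates a loss, computes $\nabla_\theta$, or reassigns $\theta$. By induction on $t$, the parameter vector at every round therefore equals $\theta_0$.

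For property (2), I will argue that each call in Phase 1 and Step B evaluates $\pi_{\theta_0}(\cdot\mid x)$ at fixed parameters. The empirical NTK $\Theta$ is defined from $\nabla_\theta z_{\theta}$ evaluated at $\theta_0$. Because $\theta$ never moves, $\Theta$ is trivially constant, which is the defining condition of the lazy regime. This also means the effective RKHS norm $B$ of any implied scorer equals its initial value. Using the SFT kernel derivation in Appendix~\ref{app:sft_kernel} with zero update steps, there is no growth in $\|w\|_{\RKHS}$, so the output is a kernel-regime predictor with $B=O(1)$. The only LLM outputs used downstream are the binary answers $r^{\mathrm{obs}}$. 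By Lemma~\ref{lem:interventional_separation}, these are kernel-separated by $\rho$, so the margin requirement \eqref{eq:margin_req} is met without norm growth.

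Property (3) is the main obstacle, because "a space not subject to Theorem~\ref{thm:kernel_obstruction}" needs to be made precise. I will take the approach of writing Step C as the map $\pi\mapsto \pi\odot\ell/\langle\pi,\ell\rangle$, where $\ell_k=(1-\eta)\ind{\hat r_k=r^{\mathrm{obs}}}+\eta\ind{\hat r_k\neq r^{\mathrm{obs}}}$. This map depends on the premise only through the discrete bits $\hat r_k$ and $r^{\mathrm{obs}}$. The final decision $\arg\max_k\pi_k$ is therefore a function of a finite binary sequence and not of the form $\langle w,\phi(\chi)\rangle_{\RKHS}$. The hypothesis of Theorem~\ref{thm:kernel_obstruction} therefore fails, and its bound does not constrain the decision.

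Concretely, I will show this by exhibiting the contrast. Two near-miss inputs with $\delta\to0$ yield different oracle answer sequences, because the queried pair lies in $\mathcal{D}(G^+,G^-)$ and the answers are reliable with probability $1-\eta$. Their posteriors then differ by a factor $((1-\eta)/\eta)^t$ that does not depend on $\delta$. No bounded-norm kernel scorer can achieve this, by Theorem~\ref{thm:kernel_obstruction}. The delicate point is that this contrast relies on the oracle reliability in Assumption~\ref{ass:oracle}, which I will invoke explicitly rather than derive.
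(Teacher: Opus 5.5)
Your checks of (1) and (2), and your rewriting of Step C as the map $\bm{\pi}\mapsto\bm{\pi}\odot\ell/\langle\bm{\pi},\ell\rangle$ on $\Delta^{n-1}$, match the paper's proof. That proof is only an inspection of Algorithm~\ref{alg:acbo}, plus the remark that the posterior can concentrate without any $w\in\RKHS$ changing. At that formal level the proposition is established.

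The trouble is in what you add on top. First, Lemma~\ref{lem:interventional_separation} does not show that the margin requirement~\eqref{eq:margin_req} ``is met without norm growth''. A lower bound on the dissimilarity of $(x_{\mathrm{int}},\text{yes})$ and $(x_{\mathrm{int}},\text{no})$ only shows that \emph{some} $w$ of $\delta$-independent norm could separate them. Whether the frozen model's actual scorer puts margin $\gamma$ on the correct answer is exactly what Assumption~\ref{ass:oracle} asserts; it is not a consequence of the lemma. Second, depending on finitely many bits is not what places the decision outside the hypothesis of Theorem~\ref{thm:kernel_obstruction}: any function of finitely many bits is trivially a bounded-norm kernel predictor in those bits. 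What does it is the discontinuous sampling, majority-vote and argmax steps that turn the LLM's scores into bits. The non-applicability is therefore only formal.

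The ``contrast'' is where this matters, and as written it undermines itself.
\begin{itemize}
\item By your own property (2), the oracle's log-score $\log\pi_{\theta_0}(r\mid x_{\mathrm{int}})$ is a kernel-type scorer with $\|w\|_{\RKHS}\le B$.
\item Query two near-miss instances on the same pair $(V_i,V_j)\in\mathcal{D}(G^+,G^-)$. The prompts $x^{\pm}_{\mathrm{int}}$ then differ only through their near-identical premises.
\item The Cauchy--Schwarz step of Theorem~\ref{thm:kernel_obstruction} bounds the change in the yes/no log-odds between the two instances by $B(\epsilon_{\mathrm{yes}}+\epsilon_{\mathrm{no}})$, where $\epsilon_r=\|\phi(x^{+}_{\mathrm{int}},r)-\phi(x^{-}_{\mathrm{int}},r)\|_{\RKHS}$.
\item Being correct with probability at least $1-\eta$ on both instances, after the $M$-fold majority vote, requires the single-query log-odds to differ by some constant $c(\eta,M)>0$.
\item So $B(\epsilon_{\mathrm{yes}}+\epsilon_{\mathrm{no}})\ge c(\eta,M)$. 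This forces $B\to\infty$ as the two prompts' features merge, which is the very blow-up you are contrasting against.
\end{itemize}
In the paper's own chain/fork example the premises coincide, so $\epsilon_r=0$ and the entire A-CBO run has the same distribution under both ground truths.

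Lemma~\ref{lem:interventional_separation} does not help here. It separates two \emph{responses} to one prompt, whereas you need to separate two nearly identical \emph{prompts}. Hence Assumption~\ref{ass:oracle} cannot be invoked for both members of a near-miss pair once (2) is granted, and no $\delta$-independent factor is available. Even with a reliable oracle, the posterior ratio changes by $((1-\eta)/\eta)^{m_{\mathrm{right}}-m_{\mathrm{wrong}}}$, not by $((1-\eta)/\eta)^{t}$. Here $m_{\mathrm{right}}$ and $m_{\mathrm{wrong}}$ count correct and incorrect answers only on rounds whose IG-selected query lies in $\mathcal{D}(G^+,G^-)$.

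Drop the contrast and the Lemma~\ref{lem:interventional_separation} margin claim, and assert item (3) only in the formal sense the paper does.
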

\vspace{-0.6cm}
\begin{proof}
Items (1) and (2) follow from inspection of Algorithm~\ref{alg:acbo}: the only interactions with 
$\mathcal{L}$ are forward-pass queries. Item (3) holds because the belief update is a deterministic 
operation on a probability vector performed by the agentic controller. The simplex $\Delta^{n-1}$ 
admits arbitrary discrete concentration without any object in $\mathcal{H}$ changing.
\end{proof}
A-CBO avoids kernel obstruction entirely by encoding the discrete decision in $\pi \in \Delta^{n-1}$ instead of $w \in \mathcal{H}$. The  LLM is used \emph{only for what it can do within the kernel regime} i.e. answering local interventional yes/no questions, not for making global near-miss causal judgments. We summarize this in Appendix~\ref{app:dichotomy}.

\section{Dichotomy Summary}
\label{app:dichotomy}

\begin{center}
\small
\begin{tabular}{@{}p{0.44\linewidth} p{0.50\linewidth}@{}}
\toprule
\textbf{Theorem~\ref{thm:kernel_obstruction} (negative)} & \textbf{Lemma~\ref{lem:interventional_separation} + Prop.~\ref{prop:lazy_preserving} (positive)} \\
\midrule
Within a bounded-norm kernel predictor, near-miss pairs cannot be separated by more than $B\kappa\sqrt{2\delta}$. & Around a frozen kernel predictor, an agentic loop separates any interventionally distinguishable hypotheses with strength controlled by $\rho$ and $\eta$, independently of $\delta$. \\
\addlinespace
Applies to SFT (lazy), DPO, ICL. & Applies to A-CBO with any frozen LLM. \\
\addlinespace
Discrete decision encoded in $w\in\RKHS$. & Discrete decision encoded in $\bm{\pi}\in\Delta^{n-1}$. \\
\addlinespace
Escape requires leaving the lazy regime. & Lazy regime preserved exactly. \\
\bottomrule
\end{tabular}
\end{center}

\section{Detailed Experimental Setup}
\label{app:detail_exp}
Our Experimental setup: benchmarks, baselines, A-CBO models and
hyperparameters, fine-tuning protocol, and evaluation metrics are mentioned in this Table~\ref{tab:setup}
% ── TABLE 1: Experimental Setup ───────────────────────────
\begin{table}[htbp]
\centering
\caption{Experimental setup: benchmarks, baselines, A-CBO models and
hyperparameters, fine-tuning protocol, and evaluation metrics.}
\label{tab:setup}
\vspace{2pt}
\small
\setlength{\tabcolsep}{5pt}
\renewcommand{\arraystretch}{1.18}
\begin{tabular}{@{} l l p{7.8cm} @{}}
\toprule
\textbf{Category} & \textbf{Entry} & \textbf{Details} \\
\midrule
\textbf{Datasets}
  & \textsc{Corr2Cause}        & 7,524 test samples, $d\!=\!2$--$6$; six causal relation templates (\texttt{Parent}, \texttt{Child}, \texttt{Ancestor}, \texttt{Descendant}, \texttt{Collider}, \texttt{Confounder}); macro-avg.\ F1~\citep{jin2024corr2cause}. \\
  & \textsc{Ext.\ C2C} (ours)  & 18,000 samples, $d\!=\!7$--$24$ (1K/depth); all-negative labels; binary rejection accuracy; near-miss gap $\propto O(1/d^2)$. \\
\midrule
\textbf{Baselines}
  & Zero-shot GPT-4            & Direct prompting, no A-CBO loop; F1\,=\,29.1. \\
  & LLaMA-7B (FT)              & Finetuned on 197,634 in-dist.\ samples; F1\,=\,92.0. \\
  & RoBERTa-L SFT              & 355M params; 1.3M Ext.\ C2C samples, 3 epochs, AdamW, lr\,=\,$2{\times}10^{-5}$, batch 32. \\
  & RoBERTa-L DPO              & Preference pairs per instance, $\beta\!=\!0.1$; $4{\times}$A100. \\
\midrule
\textbf{A-CBO Models}
  & \textit{High}              & GLM-5.1$^\star$,\ Qwen3-30B$^\star$ \quad (thinking mode enabled) \\
  & \textit{Mid}               & Qwen3.5-122B,\ Llama-3.3-70B \\
  & \textit{Low}               & Gemma-3-12B-IT,\ LLaMA-7B \\
\midrule
\textbf{Hyperparameters}
  & Budget / Noise             & $T\!=\!20$,\ $\varepsilon\!=\!0.1$ (random fraction),\ $\eta\!=\!0.1$ (oracle noise) \\
  & Stopping / Voting          & $\delta_c\!=\!0.01$ (entropy threshold),\ $M\!=\!3$ (majority votes),\ $N\!=\!8$ (candidates) \\
\midrule
\textbf{Metrics}
  & Orig.\ / Extended          & Macro-avg.\ F1 over six relation classes\,/\,binary rejection accuracy \\
\bottomrule
\end{tabular}
\end{table}

\section{Dataset Statistics}
\label{app:dataset}

\begin{table*}[htbp]
\centering
\caption{Statistics of the \textsc{Extended Corr2Cause} dataset by graph size $d$ (number of variables).
The dataset extends the original Corr2Cause ($d\!=\!2{-}6$, $\sim$200K samples) to larger causal graphs ($d\!=\!7{-}28$), totaling over \textbf{2.17M} samples.
Each graph size contains an equal distribution across all six causal relation templates.}
\label{tab:dataset_stats}
\vspace{4pt}
\renewcommand{\arraystretch}{1.12}
\centering
\resizebox{\textwidth}{!}{%
\begin{tabular}{@{} c r r r r r r r @{}}
\toprule
$\boldsymbol{d}$
  & \textbf{\# Samples}
  & \textbf{\# Train}
  & \textbf{\# Dev}
  & \textbf{\# Test}
  & \textbf{\# Tok\,/\,Premise}
  & \textbf{\% Positive}
  & \textbf{Vocab} \\
\midrule

\rowcolor{blue!5}
7  & 12,600  & 10,600  & 1,000 & 1,000 & 112.0  & 16.25 & 65 \\
\rowcolor{blue!5}
8  & 16,800  & 14,800  & 1,000 & 1,000 & 140.8  & 19.78 & 67 \\
\rowcolor{blue!5}
9  & 21,600  & 19,600  & 1,000 & 1,000 & 173.7  & 23.48 & 69 \\
\rowcolor{blue!5}
10 & 27,000  & 25,000  & 1,000 & 1,000 & 210.9  & 26.66 & 71 \\

\rowcolor{orange!5}
11 & 33,000  & 31,000  & 1,000 & 1,000 & 251.9  & 29.70 & 73 \\
\rowcolor{orange!5}
12 & 39,600  & 37,600  & 1,000 & 1,000 & 297.0  & 32.60 & 75 \\
\rowcolor{orange!5}
13 & 46,800  & 44,800  & 1,000 & 1,000 & 346.0  & 35.36 & 77 \\
\rowcolor{orange!5}
14 & 54,600  & 52,600  & 1,000 & 1,000 & 398.8  & 37.98 & 79 \\
\rowcolor{orange!5}
15 & 63,000  & 61,000  & 1,000 & 1,000 & 456.1  & 40.46 & 81 \\

\rowcolor{red!4}
16 & 72,000  & 70,000  & 1,000 & 1,000 & 516.9  & 42.80 & 83 \\
\rowcolor{red!4}
17 & 81,600  & 79,600  & 1,000 & 1,000 & 582.1  & 45.00 & 85 \\
\rowcolor{red!4}
18 & 91,800  & 89,800  & 1,000 & 1,000 & 650.8  & 46.00 & 87 \\
\rowcolor{red!4}
19 & 102,600 & 100,600 & 1,000 & 1,000 & 723.8  & 46.00 & 89 \\
\rowcolor{red!4}
20 & 114,000 & 112,000 & 1,000 & 1,000 & 800.9  & 46.00 & 91 \\

\rowcolor{violet!5}
21 & 126,000 & 124,000 & 1,000 & 1,000 & 882.1  & 46.00 & 93 \\
\rowcolor{violet!5}
22 & 138,600 & 136,600 & 1,000 & 1,000 & 966.9  & 46.00 & 95 \\
\rowcolor{violet!5}
23 & 151,800 & 149,800 & 1,000 & 1,000 & 1,055.9 & 46.00 & 97 \\
\rowcolor{violet!5}
24 & 165,600 & 163,600 & 1,000 & 1,000 & 1,148.9 & 46.00 & 99 \\

\rowcolor{gray!8}
25 & 180,000 & 180,000$^\dagger$ & --- & --- & 1,246.0 & 46.00 & 101 \\
\rowcolor{gray!8}
26 & 195,000 & 195,000$^\dagger$ & --- & --- & 1,347.2 & 46.00 & 103 \\
\rowcolor{gray!8}
27 & 210,600 & 210,600$^\dagger$ & --- & --- & 1,452.5 & 46.00 & 105 \\
\rowcolor{gray!8}
28 & 226,800 & 226,800$^\dagger$ & --- & --- & 1,561.9 & 46.00 & 107 \\

\midrule
\textbf{All} & \textbf{2,171,400} & \textbf{2,135,400} & \textbf{18,000} & \textbf{18,000}
  & \textbf{991.5} & \textbf{38.64} & \textbf{107} \\
\bottomrule
\end{tabular}%
}

\vspace{4pt}
{\footnotesize
  \# Tok\,/\,Hypothesis is constant at 8.7 across all $d$ (omitted for brevity).
  All six causal templates (parent, child, non-parent ancestor, non-child descendant, has\_collider, has\_confounder) are uniformly distributed ($\tfrac{1}{6}$ each).\\[1pt]
  $\dagger$\,$d=25{-}28$ provided as raw JSON with full graph metadata; no pre-split Dev/Test partition.\\[2pt]
  \colorbox{blue!5}{\strut}\,Small ($d\!=\!7{-}10$)\quad
  \colorbox{orange!5}{\strut}\,Medium ($d\!=\!11{-}15$)\quad
  \colorbox{red!4}{\strut}\,Large ($d\!=\!16{-}20$)\quad
  \colorbox{violet!5}{\strut}\,XL ($d\!=\!21{-}24$)\quad
  \colorbox{gray!8}{\strut}\,Extended ($d\!=\!25{-}28$)
}
\end{table*}

\section{Additional Figures}
\label{app:additional_figures}

\begin{figure}[h]
\centering
\includegraphics[width=0.55\textwidth]{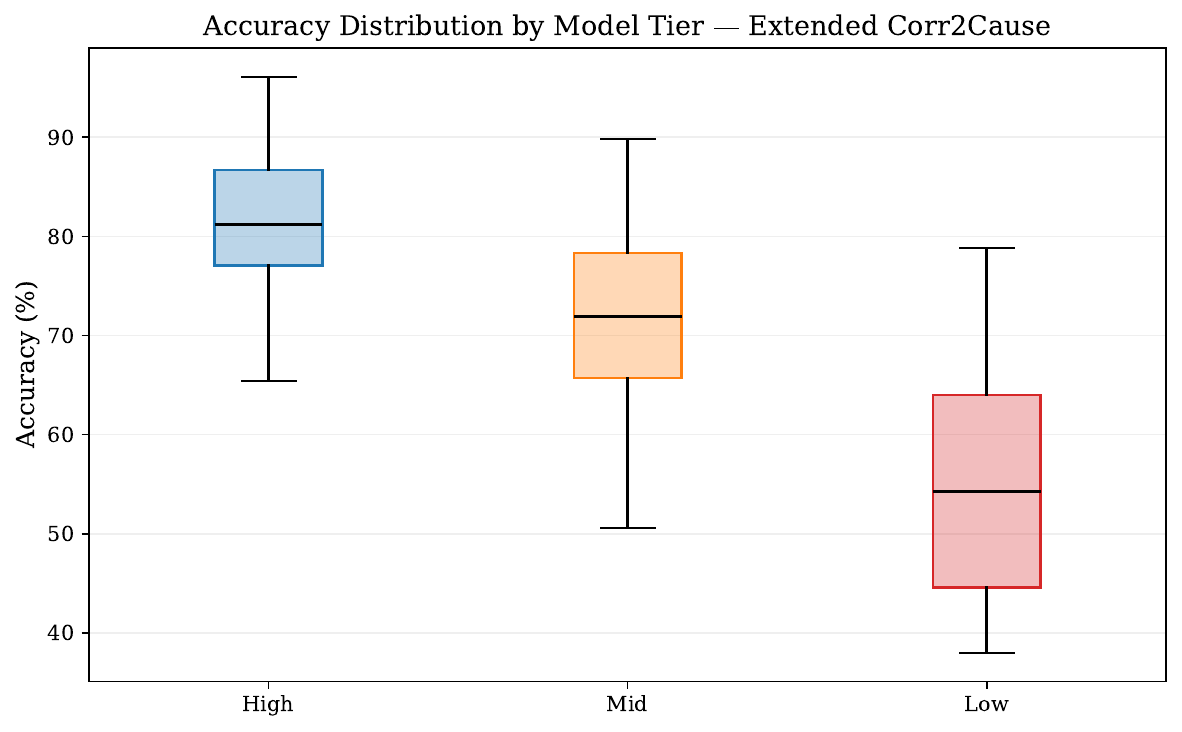}
\caption{Accuracy distribution by model tier on \textsc{Extended Corr2Cause}. Tiers are well-separated with increasing variance at lower tiers (IQR: 8.2 vs.\ 12.1 vs.\ 18.6).}
\label{fig:boxplot}
\end{figure}

\end{document}